\title{Solving Epistemic Logic Programs using Generate-and-Test with Propagation}
\author{Jorge Fandinno\textsuperscript{\rm 1} and Lute Lillo\textsuperscript{\rm 1,2}}
\title{My Publication Title --- Single Author}
\author {
    Author Name
}
\title{My Publication Title --- Multiple Authors}
\author {
    % Authors
    First Author Name\textsuperscript{\rm 1},
    Second Author Name\textsuperscript{\rm 2},
    Third Author Name\textsuperscript{\rm 1}
}
\newtheorem{theorem}{Theorem}
\newtheorem*{mtheorem}{Main Theorem}
\newtheorem{lemma}{Lemma}
\newtheorem{proposition}{Proposition}
\newtheorem{definition}{Definition}
\newtheorem{corollary}{Corollary}
\newtheorem{example}{Example}
\newcommand{\K}{\ensuremath{\mathbf{K}}}
\newcommand{\nott}[1]{\ensuremath{\mathit{n#1}}}
\newcommand{\notnot}[1]{\ensuremath{\mathit{nn#1}}}
\newcommand{\kk}[1]{\ensuremath{\mathit{k#1}}}
\newcommand{\kknott}[1]{\ensuremath{\mathit{kn#1}}}
\newcommand{\kp}[1]{\ensuremath{\mathit{pk#1}}}
\newcommand{\kpnott}[1]{\ensuremath{\mathit{pkn#1}}}
\newcommand{\NF}[1]{\ensuremath{\mathit{NF}(#1)}}
\newcommand{\AT}{\ensuremath{\mathit{At}}}
\newcommand{\At}[1]{\ensuremath{\mathit{At}(#1)}}
\newcommand{\restr}[2]{{#1}_{\!|\!#2}}
\newcommand{\SM}[1]{\text{\rm SM}[#1]}
\newcommand{\SMC}[1]{\text{\rm CC}[#1]}
\newcommand\wv{\mathbb{W}}
\newcommand{\EPASP}{\texttt{EP-ASP}}
\newcommand{\eclingo}{\texttt{eclingo}}
\newcounter{programcounter}
\newcommand{\newprogramph}{\refstepcounter{programcounter}}
\newcommand{\program}[1]{\ensuremath{\Pi_{#1}}}
\newcommand{\speedup}[1]{\raisebox{0.1em}{\scriptsize$\sim$}#1}
\begin{document}

\maketitle

\begin{abstract}
This paper introduces a general framework for generate-and-test-based solvers for epistemic logic programs that can be instantiated with different generator and tester programs.
It proves sufficient conditions on those programs for the correctness of the solvers built using this framework.
It also introduces a new generator program that incorporates the \emph{propagation of epistemic consequences} and shows that this can exponentially reduce the number of candidates that need to be tested while only incurring a linear overhead.
We implement a new solver based on these theoretical findings and experimentally show that it outperforms existing solvers by achieving a~\speedup{3.3x} speed\nobreakdash-up and solving 91\% more instances on well\nobreakdash-known benchmarks.
\end{abstract}

%!TeX root = paper-aaai25.tex

\section{Introduction}\label{sec:introduction}

Answer Set Programming (ASP) is a declarative programming language well\nobreakdash-suited for solving knowledge\nobreakdash-intensive search problems, which allows users to encode problems such that the resulting output of the program (called stable models or answer sets) directly corresponds to solutions of the original problem~\cite{gellif88b,gellif91a,lifschitz08b,breitr11a,schwol18a,lifschitz19a}.
Epistemic logic programs (ELPs) extend the ASP language by allowing the use of \emph{subjective literals} in the body of rules~\cite{gelfond91a,gelpri93a,gelfond94a,fafage22a}.
As an example, rules
\begin{align*}
\mathit{felon}   &\leftarrow \K \mathit{break\_rule}
\\
\mathit{suspect} &\leftarrow \neg \K \mathit{break\_rule}, \neg \K \neg \mathit{break\_rule}
\end{align*}
encode that a person is a~$\mathit{felon}$ if we can determine that she broke a rule, and she is a~$\mathit{suspect}$ if it cannot be determined that she has broken it nor that she has not done so~\cite{solekale17a}.
ELPs allow us to naturally represent problems that involve reasoning about the lack of knowledge of agents---as illustrated by the above example---as well as problems laying on the second level of the polynomial hierarchy such as conformant planning~\cite{solekale17a,kawabagezh20a, cafafa19b, cafafa21a}, action reversibility~\cite{famoch21a} or reasoning about attack trees and graphs~\cite{fafage22a}.
It is worth noting that the computational complexity of ELPs is higher than that of ASP: the problem of determining whether an ELP has a solution (called \emph{worldview}) is $\Sigma^{P}_{3}$\nobreakdash-complete~\cite{truszczynski11b}, one level higher in the polynomial hierarchy than the complexity deciding whether an objective program has a stable model~\cite{daeigovo01a}.
% , which is $\Sigma^{P}_{2}$\nobreakdash-complete~\cite{daeigovo01a}.
%
% For non\nobreakdash-disjunctive programs, the problem becomes~$\Sigma^{P}_{2}$\nobreakdash-complete and NP\nobreakdash-complete, respectively.
%
Hence, the development of efficient tools to solve ELPs, called ELP solvers, is crucial for exploiting the high expressivity capabilities of ELPs in practice.
Existing ELP solvers can be classified in three categories: \emph{post\nobreakdash-processing}-based, \emph{generate\nobreakdash-and\nobreakdash-test}-based, and \emph{translational}-based.
Early solvers were {post\nobreakdash-processing\nobreakdash-based} as they use an ASP solver to compute all stable models of some objective (or non\nobreakdash-epistemic) program that are post-processed to obtain its worldviews (see~\citealt{leckah18a} for a survey).
Generate\nobreakdash-and\nobreakdash-test\nobreakdash-based solvers on the other hand use two instances of an ASP solver---one to generate candidate worldviews and another to check whether they are indeed worldviews---avoiding the generation of all stable models in most practical cases~\cite{solekale17a,cafagarosc20a}.
Translational-based solvers translate ELPs into an alternative language that matches the computational complexity of ELPS such as non\nobreakdash-ground objective programs with large rules~\cite{bimowo20a} or ASP with Quantifiers~\cite{fabmor23a}.
Existing generate\nobreakdash-and\nobreakdash-test and translational based solvers outperform post\nobreakdash-processing based solvers in existing benchmarks~\cite{leckah18a,cafagarosc20a}, while the literature lacks clear evidence when comparing generate\nobreakdash-and\nobreakdash-test and translational based solvers.
Our own experiments show that existing generate\nobreakdash-and\nobreakdash-test-based solvers outperform existing translational-based solvers (See the Experimental Evaluation Section below).

In this paper, (i) we introduce a general framework for generate\nobreakdash-and\nobreakdash-test-based solvers that can be instantiated with different generator and tester programs, and we prove sufficient conditions on those programs for the correctness of the solvers built using this framework;
(ii) we instantiate this framework with a new generator program that incorporates the \emph{propagation of epistemic consequences} and prove that this can exponentially reduce the number of candidates that need to be tested while only incurring in a linear overhead; and (iii) we implement a new solver based on the two previous points and experimentally demonstrate that it can solve 91\% more instances than the best performing available solver with a \speedup{3.3x} speed\nobreakdash-up.

\section{Background}\label{sec:background}

We assume some familiarity with the answer set semantics for disjunctive logic programs~\cite{gellif91a}.
Given a set of atoms $\AT$, an \emph{objective literal} is either an atom or an atom preceded by one or two occurrences of the  negation symbol~``$\neg$.''
An \emph{extended objective literal} is either an objective literal or a truth constant%
\footnote{For a simpler description of program transformations, we allow truth constants in rules, where $\top$ denotes true and $\bot$ denotes false.  These constants can be easily removed.%
}.
% , that is, 
% \mbox{$a \in \AT \cup \{\top,\bot\}$},
% or an atom preceded by 
% % one or two occurrences of 
% the default negation symbol~``$\neg$.''
%
% The syntax of epistemic logic programs is an extension of the language of logic programs.
%
An expression of the form $\K l$ with $l$ being an extended objective literal is called \emph{subjective atom}.
A \emph{subjective literal}~$L$ is a subjective atom possibly preceded by one or two occurrences of the negation symbol.
A \emph{literal} is either an extended objective literal or a subjective literal.
A \emph{rule} is an expression of the~form:
\begin{gather}
a_1 \vee \dots \vee a_n \ \leftarrow \ L_1, \dots, L_m
	\label{eq:rule}
\end{gather}
with $n\geq 0$ and $m\geq 0$, where each ${a_i \in \AT}$ is an atom and each $L_j$ a literal.
The left- and right-hand sides of~\eqref{eq:rule} are respectively called the \emph{head} and \emph{body} of the rule.
When~${n=0}$, the rule is called a \emph{constraint} and we usually write~$
\bot$ as its head.
A \emph{choice rule} is an expression of the form:
\begin{gather}
\{ a \} \leftarrow L_1, \dots, L_m
    \label{eq:choice}
\end{gather}
and it is understood as a shorthand for the rule:
\begin{gather}
a \leftarrow L_1, \dots, L_m, \neg\neg a
    \label{eq:choice2}
\end{gather}
An \emph{epistemic program} (or epistemic specification) is a finite set of rules.
Given an epistemic program $\Pi$, we define $\At{\Pi}$ as the set of all atoms that occur in program~$\Pi$.
An epistemic program without subjective atoms is called an~\emph{objective program}.
Similarly, a rule without subjective atoms is called \emph{objective rule}.
In many descriptions of epistemic logic programs, a second epistemic operator~$\mathbf{M}$ is also introduced.
Though interesting from a knowledge representation perspective, this operator is not necessary from the solving perspective because it can be replaced by~$\neg\K\neg$~\cite{fafage22a}.
Therefore, we focus on programs with the operator~$\K$ only.

An interpretation~${I \subseteq \AT}$ is a set of atoms.
We say that an interpretation~$I$ satisfies an extended objective literal~$L$ if~$L$ is~$\top$, or $L$ is an atom and~${L \in I}$, or~$L$ is of the form~$\neg a$ and~${a \not\in I}$, or~$L$ is of the form~$\neg\neg a$ and~${a \in I}$.
An interpretation~$I$ \emph{satisfies} an objective rule if it satisfies some atom in the head of the rule whenever it satisfies all the literals in its body.
An interpretation~$I$ is a \emph{model} of an objective program if it satisfies all rules of the program.
We say that a literal is \emph{negated} if it is of the form~$\neg A$ or~$\neg\neg A$ for some (subjective) atom~$A$.
The reduct of an objective program~$\Pi$ with respect to an interpretation~$I$, in symbols~$\Pi^I$, is obtained by removing all rules with a body that contains a negated literal~$L$ that is not satisfied by~$I$, and by removing all negated literals from the remaining rules.
An interpretation~$I$ is a \emph{stable model} of an objective program~$\Pi$ if~$I$ is a $\subseteq$\nobreakdash-minimal model of the reduct~$\Pi^I$.
$\SM{\Pi}$ denotes the stable models of~$\Pi$.

An \emph{assumption} is a set of objective literals~$A$ of the form~$a$ or~$\neg a$ such that it does not contain both~$a$ and~$\neg a$ for any atom~$a$.
A set of atoms~$M$ is a \emph{stable model} of an objective program~$\Pi$ \emph{under assumption~$A$} if~$M$ is a stable model of the program
\begin{gather*}
    \Pi \cup \{ \bot \leftarrow\neg a \mid a \in A \} \cup \{ \bot \leftarrow a  \mid \neg a \in A \}.
\end{gather*}
By~$\SM{\Pi;A}$ we denote the stable models of~$\Pi$ under the assumption~$A$.
By~$\SMC{\Pi;A}$ we denote the set of of \emph{cautions consequences} of~$\Pi$ \emph{under the assumption~$A$}, that is,
\begin{gather*}
    \SMC{\Pi;A} \quad = \quad \bigcap \SM{\Pi;A}
\end{gather*}
% We write~$\SMC{\Pi}$ instead~$\SMC{\Pi;\emptyset}$.

Let $\wv$ be a set of interpretations.
We write $\wv \models \K l$ if objective literal $l$ is satisfied by every interpretation~$I$ of $\wv$, and $\wv \models \neg \K l$ otherwise.

\begin{definition}[Subjective reduct; \protect\citealt{fafage22a}]
The \emph{subjective reduct} of an epistemic program~$\Pi$ with respect to a set of  interpretations~$\wv$, written~$\Pi^\wv$, is
obtained by replacing each subjective literal $L$ by $\top$ if $\wv \models L$ and by $\bot$ otherwise.\qed
\end{definition}

Note that the subjective reduct of an epistemic program does not contain subjective literals and, thus, it is an objective program.
Therefore, we can collect its stable models.

\begin{definition}
A \emph{belief interpretation}~$\wv$ is a non\nobreakdash-empty set of interpretations.
A \emph{worldview} of an epistemic program $\Pi$ is a belief interpretation~$\wv$ such that~$\wv=\SM{\Pi^\wv}$.    
\end{definition}

This definition of worldview is a rephrasing of the original definition by~\citet{gelfond94a}, usually denoted G94.
In this work, we focus on obtaining worldviews of epistemic programs under this semantics.
From a solving viewpoint, this is not a limitation because other semantics can use G94 as their basis.
There are simple linear\nobreakdash-time reductions from the G11~\cite{gelfond11a} and the K15~\cite{kawabagezh15a} semantics to G94~\cite{fafage22a}, while the worldviews according to the S16~\cite{sheeit16a,sheeit17a} and the C19~\cite{cafafa19a,cafafa20a} semantics are a selection of the~K15~\cite{kaleso16a,solekale17a} and the~G94 worldviews, respectively.

\section{Normal Form}\label{sec:normal.form}

To develop algorithms for solving epistemic logic programs, it is interesting to have a normal form that simplifies the number of cases the algorithm must consider.
This also significantly simplifies the presentation of the formal results.
We say that an epistemic program is in \emph{normal form} if negation does not occur in the scope of the~$\K$ operator, that is, every subjective literal is of the forms~$\K a$, $\neg\K a$ or~$\neg\neg\K a$ with~$a$ an atom.
%
% (i) every subjective literal is of the form~$\K a$ with~$a$ an atom; 
% (ii) there are no occurrences of double negated literals;
% and (iii) every atom in the program occurs in the head of some rule.
%
We can transform any epistemic program~$\Pi$ into a corresponding program in normal form by 
\begin{itemize}
    % \item replacing every literal of the form~$\neg\neg L$ by~$\neg\nott{L}$ and adding rule~$\nott{L} \leftarrow \neg L$ to~$\Pi$;
    
    \item replacing every subjective literal~$\K \neg a$ by~$\K \nott{a}$ and adding rule~$\nott{a} \leftarrow \neg a$ to~$\Pi$;
    
    \item replacing every subjective literal~$\K \neg\neg a$ by~$\K \notnot{a}$ and adding rule~$\notnot{a} \leftarrow \neg\neg a$ to~$\Pi$;
    
    % \item removing all rules whose body contains a positive %
    % %
    % \footnote{An occurrence is said to be 
    % \emph{negative} if it occurs in the scope of an odd number of negations; it is said to be \emph{positive} otherwise.}
    % %
    % occurrence of an atom~$a$ that does not occur in the head of any rule; and
    % \item removing from the body of all remaining rules all literals that contain a negative occurrence of an atom~$a$ that does not occur in the head of any rule.
\end{itemize}
We assume that for every atom~$a$ in~$\Pi$, atoms~$\nott{a}$ and~$\notnot{a}$ do not occur in~$\Pi$.
%
% Similarly, we assume that for every literal~$L$ in~$\Pi$, atom~$\nott{L}$ do not occur in~$\Pi$.
%
By~$\NF{\Pi}$ we denote the program obtained from~$\Pi$ by applying these two steps.
In the following result~$\restr{\wv}{\At{\Pi}}$ stands for the set of interpretations obtained from~$\wv$ by removing all atoms that do not occur in~$\Pi$ from all the interpretations in~$\wv$.

\begin{theorem}\label{thm:normal.form}
    % Let~$\Pi$ be a program.
    %
    There is a one\nobreakdash-to\nobreakdash-one correspondence between the worldviews of~$\Pi$ and~$\NF{\Pi}$ s.t.~$\wv$ is a worldview of~$\NF{\Pi}$ if and only if~$\restr{\wv}{\At{\Pi}}$ is a worldview of~$\Pi$.
\end{theorem}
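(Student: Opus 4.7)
The plan is to reduce the theorem to the two individual transformations and then to a simple splitting argument for stable models. Since the two replacement steps handle disjoint sets of fresh atoms (namely $\nott{a}$ versus $\notnot{a}$), I would prove the statement separately for each step (or in one go, since they do not interact) and observe that the composition preserves the one-to-one correspondence.

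The core of the argument would be a lemma about how the auxiliary atoms are pinned down inside any stable model. Specifically, I would show that if $I$ is a stable model of $\NF{\Pi}^\wv$, then (a)~$\nott{a} \in I$ iff $a \notin I$ and (b)~$\notnot{a} \in I$ iff $a \in I$. This is forced by the fact that the only rules mentioning $\nott{a}$ in the head and in all of $\NF{\Pi}$ are the added definitions $\nott{a} \leftarrow \neg a$ and $\notnot{a} \leftarrow \neg\neg a$, together with $\subseteq$-minimality of $I$ on the reduct. Consequently, the map $I \mapsto \restr{I}{\At{\Pi}}$ is a bijection between $\SM{\NF{\Pi}^\wv}$ and a suitable set of interpretations over $\At{\Pi}$, with inverse given by adding exactly one of $\nott{a}$ or $\notnot{a}$ depending on whether $a$ is in or out of the interpretation.

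Building on that lemma, I would show the subjective semantics match up, i.e., $\wv \models \K\nott{a}$ iff $\restr{\wv}{\At{\Pi}} \models \K\neg a$, and analogously for $\notnot{a}$ and $\neg\neg a$. Combined with the original definition of the subjective reduct, this means the reduct $\NF{\Pi}^\wv$ is the reduct $\Pi^{\restr{\wv}{\At{\Pi}}}$ (with the renamed subjective literals already evaluated) extended with the definitional rules for the auxiliary atoms. A straightforward splitting argument on the fresh atoms (they do not appear in the heads of any rule of $\Pi$, and their only defining rules are the added ones) then gives a bijection between $\SM{\NF{\Pi}^\wv}$ and $\SM{\Pi^{\restr{\wv}{\At{\Pi}}}}$ via the restriction map.

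From here the theorem is essentially bookkeeping. If $\wv$ is a worldview of $\NF{\Pi}$, i.e., $\wv = \SM{\NF{\Pi}^\wv}$, applying $\restr{\cdot}{\At{\Pi}}$ on both sides yields $\restr{\wv}{\At{\Pi}} = \SM{\Pi^{\restr{\wv}{\At{\Pi}}}}$, so $\restr{\wv}{\At{\Pi}}$ is a worldview of $\Pi$. Conversely, given a worldview $\wv'$ of $\Pi$, the canonical extension of each $I \in \wv'$ by the forced auxiliary atoms produces the unique belief interpretation $\wv$ with $\restr{\wv}{\At{\Pi}} = \wv'$ and $\wv = \SM{\NF{\Pi}^\wv}$; uniqueness of the extension gives the one-to-one part. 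The main obstacle I expect is being careful with the reduct and minimality when the auxiliary atoms are added: one must ensure that minimality of $I$ on $\NF{\Pi}^\wv$ corresponds exactly to minimality of $\restr{I}{\At{\Pi}}$ on $\Pi^{\restr{\wv}{\At{\Pi}}}$, which is why the splitting-style argument (relying on the fact that $\nott{a}$ and $\notnot{a}$ occur nowhere else in $\NF{\Pi}$) is needed to rule out spurious smaller models.
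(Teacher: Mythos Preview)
Your proposal is correct and follows essentially the same route as the paper. The paper packages the splitting step into an auxiliary lemma stating that, under the natural correspondence between $\wv$ and $\wv'=\restr{\wv}{\At{\Pi}}$ on the relevant subjective literals, one has $\SM{\NF{\Pi}^\wv}=\{M\cup\hat{M}\mid M\in\SM{\Pi^{\wv'}}\}$, proved via the Splitting Theorem with the fresh auxiliary atoms as the splitting set; the theorem is then derived exactly by the ``pinning'' observation $\nott{a}\in I\Leftrightarrow a\notin I$, $\notnot{a}\in I\Leftrightarrow a\in I$ for $I\in\wv$ and the bookkeeping you describe in both directions.
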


% \cite{fanlil24a}

% This allows us to focus only on programs in normal form, simplifying both the presentation of results as well as the implementation.
% %
% In the rest of the paper, we may present examples that are not in normal form, but we assume that their normal form is computed before applying any algorithm.

\section{Worldviews by Generate-and-Testing }\label{sec:guess-and-check}

% [Son et al., 2017] introduced a guess\nobreakdash-and\nobreakdash-check approach to compute the worldviews of a program under the K15 semantics.??
% %
% [Cabalar et al., ] adapted this approach to compute the worldviews under the G94 semantics.
%
As mentioned in the introduction, generate\nobreakdash-and\nobreakdash-test\nobreakdash-based solvers rely on using two objective programs: a generator program that provides \emph{candidates} and a tester program that checks whether a candidate is a worldview.
Algorithm~\ref{alg:guess-and-check} summarizes how to compute~$n$ worldviews using this approach, where~$G(\Pi)$ and~$T(\Pi)$ respectively are a \emph{generator} and a \emph{tester program} for~$\Pi$.
\begin{algorithm}
    \caption{Generate-and-test computation of~$n$ worldviews of a program~$\Pi$ in normal form.}\label{alg:guess-and-check}
    \hspace*{\algorithmicindent} \textbf{Input} Generate program $G(\Pi)$\\
    \hspace*{\algorithmicindent} \textbf{Input} Test program $T(\Pi)$\\
    \hspace*{\algorithmicindent} \textbf{Input} Number of requested worldviews~$n$\\
    \hspace*{\algorithmicindent} \textbf{Output} Set~$\Omega$ containing at most~$n$  worldviews of~$\Pi$

    \begin{algorithmic}[1] %[1] enables line numbers
        \STATE Let $\Omega=\emptyset$.
        \FOR{$M$ in $\SM{G(\Pi)}$} \label{alg.generator.loop.ini}
            \IF {\texttt{Test$(T(\Pi),\,M)$}}
            \STATE $\wv$ = \texttt{BuildWorldView$(M)$}
            \STATE $\Omega$ = $\Omega \cup \{ \wv \}$
                \IF {$|\Omega| \geq n$}
                \STATE \textbf{return} $\Omega$
                \ENDIF
            \ENDIF
        \ENDFOR \label{alg.generator.loop.end}
        \STATE \textbf{return} $\Omega$
    \end{algorithmic}
    % \KwData{Program~$\Pi$, Maximum number of worldviews~$n$}
    % \SetKwFunction{WorldViews}{WorldViews}
    % \SetKwFunction{ExtractAssumptionsAndQueires}{ExtractAssumptionsAndQueires}
    % \SetKwFunction{BuildWorldView}{BuildWorldView}
    % \SetKwFunction{Add}{Add}
    % \SetKwFunction{Test}{Test}
    % \KwResult{Bool}
    %  \WorldViews{$\Pi$, n}\;
    %  \Begin{
    %     $\Omega$ = $\emptyset$\;
    %     \For{{$M$} \textbf{in} {$\SM{G(\Pi)}$}}
    %     {
    %         \label{alg.generator.loop.ini}
    %         % $A,Q$ = \ExtractAssumptionsAndQueires{$M$}\;
    %         % \If{$Q \subseteq \SM{T(\Pi); A}$}
    %         \If{\Test{$T(\Pi)$, $M$}}
    %         {
    %             $\wv$ = \BuildWorldView{$M$}\;
    %             add $\wv$ to $\Omega$\;
    %             \If{$|\Omega| \geq n$}
    %             {
    %                 \Return $\Omega$\;
    %             }
    %         }
    %     }\label{alg.generator.loop.end}
    %     \Return $\Omega$\;
    % }
\end{algorithm}
Different generator and tester programs can be used to compute the worldviews of a program.
Here we provide general definitions for these programs with sufficient conditions to ensure that Algorithm~\ref{alg:guess-and-check} correctly computes the worldviews of a program.

We assume that, for each subjective literal~$\K a$ in~$\Pi$, there is a new fresh objective atom~$\kk{a}$ that does not occur anywhere else in the program.
For simplicity, we assume that no atom occurring in the program starts with the letter~$k$ and, thus, all atoms that start with this letter are newly introduced.
We denote by~$K(\Pi)$ the set of these newly introduced atoms.
Given an interpretation~${M \subseteq \At{\Pi} \cup K(\Pi)}$, we define~${k(M) = M \cap K(\Pi)}$ as the set of atoms of the form~${\kk{a} \in K(\Pi)}$ that belong to~$M$ with~${a \in \At{\Pi}}$.
For a worldview~$\wv$, by~$k(\wv)$ we denote the set of atoms of~${\kk{a} \in K(\Pi)}$ such that~$\wv \models \K a$.
In our characterization, each worldview~$\wv$ of a program~$\Pi$ is associated with a stable model~$M$ of an objective program such that~${k(\wv) = k(M)}$ satisfying some extra conditions that we introduce below.
Using this terminology, we can define our general notions of generator and tester program as follows.
%
% For an interpretation~$I$, by~$\restr{I}{S} = I \cap S$ we denote the set of atoms of~$M$ that belong to~$S$.
%

\begin{definition}[Generator Program]\label{def:generator.program}
We say that an objective program~$G(\Pi)$ is a \emph{generator program} for a program~$\Pi$ if it satisfies the following conditions:
\begin{itemize}
    \item for every worldview~$\wv$ of~$\Pi$, there is~$M \in \SM{G(\Pi)}$ satisfying~${k(\wv) = k(M)}$ and~${(M \cap \At{\Pi}) \in \wv}$.
    
    % ~$\restr{M}{\At{\Pi}} \in \wv$.
    \item every stable model~${M \in \SM{G(\Pi)}}$ and every set of interpretations of~$\wv$  satisfying~${k(\wv) = k(M)}$ also satisfy~${(M \cap\At{\Pi}) \in \SM{\Pi^\wv}}$.
\end{itemize}
\end{definition}
\noindent
The first condition in Definition~\ref{def:generator.program} ensures that, for every worldview of the original program, the generator program has a stable model that represents that worldview.
The second condition ensures that such a stable model corresponds to a non\nobreakdash-empty set, that is, a belief interpretation.
Recall, that empty sets of interpretations cannot be worldviews.

Before providing our general notion of a tester program, let us introduce an example of a tester program.
The intuitive idea of a tester program is that it behaves as the subjective reduct of the original program modulo the auxiliary atoms when used in combination with the proper assumptions.
For any rule~$r$ of the form of~\eqref{eq:rule}, by~$k(r)$ we denote the result of replacing each subjective literal~$\K a$ by~$\kk{a}$.
For any program~$\Pi$, by~$k(\Pi)$ we denote the program obtained from~$\Pi$ by replacing each rule~$r$ of~$\Pi$ by~$k(r)$.
Clearly, $k(\Pi)$ is an objective program over~$\At{\Pi} \cup K(\Pi)$.
By~$T_0(\Pi)$ we denote the program obtained from~$k(\Pi)$ by adding a choice rule~$\{ \kk{a} \}$ for each atom~$\kk{a} \in K(\Pi)$.
The following result shows that~$T_0(\Pi)$ behaves as the subjective reduct of~$\Pi$ modulo the auxiliary atoms under the proper assumptions.

\begin{proposition}\label{prop:basic.tester.program}
    Let~$\Pi$ be a program and~$\wv$ be a set of interpretations.
    Then, 
    \begin{gather}
        \{ M \cup k(\wv) \mid M \in \SM{\Pi^\wv} \} \ = \ \SM{T_0(\Pi); k(\wv)}
        \label{eq:1:prop:basic.generator.tester.program}
%     \end{gather}
% and   
% \begin{gather}
\\
    \SM{\Pi^\wv} \ = \ \restr{\SM{T_0(\Pi); k(\wv)}}{\At{\Pi}}
    \label{eq:1:cor:basic.generator.tester.program}
\end{gather}
\end{proposition}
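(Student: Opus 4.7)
The plan is to prove~\eqref{eq:1:prop:basic.generator.tester.program} by exhibiting the explicit bijection $M \mapsto M \cup k(\wv)$ between the two sides; equation~\eqref{eq:1:cor:basic.generator.tester.program} then follows immediately by intersecting both sides with $\At{\Pi}$, since the fresh auxiliary atoms in $K(\Pi)$ are disjoint from $\At{\Pi}$.

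First I would show that every $N \in \SM{T_0(\Pi); k(\wv)}$ satisfies $N \cap K(\Pi) = k(\wv)$, so that every such $N$ has the form $M \cup k(\wv)$ with $M \subseteq \At{\Pi}$. The inclusion $k(\wv) \subseteq N$ is immediate from the assumption constraints $\bot \leftarrow \neg \kk{a}$ for $\kk{a} \in k(\wv)$, whose bodies are discharged by the choice rule $\{\kk{a}\}$ expanding to $\kk{a} \leftarrow \neg\neg \kk{a}$. Minimality of stable models, together with the fact that no rule of $k(\Pi)$ has a $k$-atom in its head, forces $\kk{a} \notin N$ for each $\kk{a} \notin k(\wv)$, since the only possible support for such a $\kk{a}$ would be its own choice rule, which does not survive the reduct unless $\kk{a}$ is already in $N$.

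The central step is a rule-by-rule comparison between $\Pi^\wv$ and the ASP reduct of $T_0(\Pi)$ (augmented with the assumption constraints) with respect to $N = M \cup k(\wv)$. Because $\Pi$ is assumed to be in normal form, every subjective literal has one of the shapes $\K a$, $\neg \K a$, or $\neg\neg \K a$, and for each of these I would check directly that the subjective-reduct replacement by $\top$ or $\bot$ (depending on whether $\wv \models \K a$, equivalently $\kk{a} \in k(\wv)$) has exactly the same effect on a rule as the ASP reduct applied to the corresponding $\kk{a}$, $\neg \kk{a}$, or $\neg\neg \kk{a}$ when $\kk{a}$ is fixed accordingly in $N$. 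Once the auxiliary rules and already-satisfied constraints are stripped out of the reduct, what remains over $\At{\Pi}$ is precisely $(\Pi^\wv)^M$, so $\subseteq$-minimality transfers in both directions, giving~\eqref{eq:1:prop:basic.generator.tester.program}.

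The main obstacle I anticipate is the careful bookkeeping around the double-negated case $\neg\neg \K a$: both the subjective reduct and the ASP reduct act through a double negation here, so the agreement of the two operations on rules has to be checked directly rather than inherited by duality from the positive case. Ensuring that the choice rules contribute no stable model with $\kk{a} \notin k(\wv)$ and that the normal-form hypothesis really does exclude subjective literals of the form $\K \neg a$ or $\K \neg\neg a$ (which would require additional cases) are the other delicate points; both are handled by the first step and by the assumption on $\Pi$, respectively.
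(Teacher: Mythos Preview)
Your first step contains a genuine error. You argue that minimality forces $\kk{a} \notin N$ whenever $\kk{a} \notin k(\wv)$ because ``the only possible support for such a $\kk{a}$ would be its own choice rule, which does not survive the reduct unless $\kk{a}$ is already in $N$.'' But that clause is exactly why the argument fails: if $\kk{a} \in N$, the rule $\kk{a} \leftarrow \neg\neg \kk{a}$ \emph{does} survive the reduct, as the fact $\kk{a}$, and $N$ is then a minimal model of that reduct with $\kk{a}$ included. Choice rules admit both truth values stably; minimality does not eliminate the $\kk{a}\in N$ branch. Concretely, with $\Pi = \{\, b \leftarrow \K a \,\}$ and $\wv = \{\emptyset\}$ one has $k(\wv) = \emptyset$, yet $\{\kk{a}, b\}$ is a stable model of $T_0(\Pi)$ under the empty positive assumption, so $N \cap K(\Pi) \neq k(\wv)$ and the claimed equality~\eqref{eq:1:prop:basic.generator.tester.program} would fail under your reading.

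The paper avoids this by treating the assumption $k(\wv)$ as a \emph{complete} assignment over $K(\Pi)$: its proof explicitly includes the constraints $\{\, \neg a \mid \neg a \in k(\wv) \,\}$, which is only nonempty if the negative literals $\neg\kk{a}$ for $\kk{a} \in K(\Pi)\setminus k(\wv)$ are taken to be part of the assumption as well. Those constraints kill the unwanted branch of each choice outright, with no appeal to minimality. Once the $K(\Pi)$-part of every stable model is thus pinned to $k(\wv)$, the paper proceeds by a short chain of equivalence-preserving program transformations (through intermediate programs $\Pi_0,\Pi_1,\Pi_2$ down to $\Pi_3 = \Pi^\wv$) rather than the rule-by-rule reduct comparison you outline. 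Your comparison would also go through once the first step is repaired to use the complete-assignment reading of $k(\wv)$, so the remainder of your plan is sound; but as written the argument breaks at the outset.
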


\noindent
The following definition generalizes the above example of a tester program.

\begin{definition}[Tester Program]\label{def:tester.program}
We say that an objective program~$T(\Pi)$ is a \emph{tester program} for a program~$\Pi$ if it safisfies the following condition:
\begin{itemize}
    \item $\wv = \restr{\SM{T(\Pi); k(\wv)}}{\At{\Pi}}$ holds for every worldview~$\wv$ of~$\Pi$, and
    \item every non\nobreakdash-empty set of interpretations~$\wv$ that is not a worldview of~$\Pi$ satisfies~$\wv \neq \restr{\SM{T(\Pi); k(\wv)}}{\At{\Pi}}$
    % there is a stable model~$M$ of~$G(\Pi)$ satisfying~$k(\wv) = k(M)$ and~$\restr{M}{\At{\Pi}} \in W$.
    % \item for epistemic interpretations~$M \subseteq \At{\Pi} \cup K$, if~$\SM{T(\Pi); k(M)}$ is non\nobreakdash-empty, then it is a worldview of~$\Pi$.
\end{itemize}
\end{definition}

\noindent
Using the second equality of Proposition~\ref{prop:basic.tester.program}, it is easy to see that program~$T_0(\Pi)$ is a tester program for~$\Pi$ because any non\nobreakdash-empty set of interpretations~$\wv$ is a worldview iff
\begin{gather*}
    \wv \ = \ \SM{\Pi^\wv} \ =\  \restr{\SM{T(\Pi); k(\wv)}}{\At{\Pi}}
\end{gather*}
Definition~\ref{def:tester.program} does only require that~$T(\Pi)$ behaves as the subjective reduct on actual worldviews.
For non\nobreakdash-worldviews, we only require that it does not produce false positives.
This relaxation may allow the introduction of further optimizations in the future.

\begin{definition}[Generate and test worldviews]
    If~$G(\Pi)$ and~$T(\Pi)$ respectively are a generator and a tester program for~$\Pi$, by~$\mathit{wv}(G(\Pi), T(\Pi))$ we denote the set of all stable models~$M$ of~$G(\Pi)$ such that
    \begin{gather}
        k(M) \quad = \quad \{ ka\in K(\Pi) \mid a \in \SMC{T(\Pi); k(M)} \}.
        \label{eq:1:thm:guess-and-check}
    \end{gather}
\end{definition}

\noindent
The following theorem shows the correspondence between the worldviews of~$\Pi$ and members of~$\mathit{wv}(G(\Pi), T(\Pi))$.

\begin{theorem}\label{thm:guess-and-check}
    Let~$G(\Pi)$ be any generator program for~$\Pi$ and~$T(\Pi)$ be any tester program for~$\Pi$.
    Then, there is a one\nobreakdash-to\nobreakdash-many correspondence between the worldviews of~$\Pi$ and the members of~$\mathit{wv}(G(\Pi), T(\Pi))$ such that
    \begin{enumerate}
        \item if $\wv$ is a worldview of~$\Pi$, then there is a stable model~$M$ of~$G(\Pi)$ with~$k(\wv) = k(M)$ satisfying~\eqref{eq:1:thm:guess-and-check};
        
        \item if~$M$ is a stable model of~$G(\Pi)$ satisfying~\eqref{eq:1:thm:guess-and-check}, then the set of stable models~$\SM{T(\Pi); k(M)}$ is a worldview of~$\Pi$.
    \end{enumerate}  
\end{theorem}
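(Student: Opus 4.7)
The plan is to prove the two directions separately, using in each case the ``first conditions'' of Definitions~\ref{def:generator.program} and~\ref{def:tester.program} for the forward direction and the ``second conditions'' for the backward one. The bridge between generator and tester is the auxiliary family $K(\Pi)$: both programs communicate only through $k(\cdot)$, and the key identity is that $\kk{a}\in k(\wv)$ iff $a$ belongs to every interpretation in $\wv$.

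For direction~(1), I start with a worldview $\wv$ of $\Pi$. The first condition of Definition~\ref{def:generator.program} immediately yields an $M\in\SM{G(\Pi)}$ with $k(\wv)=k(M)$; it remains to verify that $M$ satisfies~\eqref{eq:1:thm:guess-and-check}. For each $\kk{a}\in K(\Pi)$, I would chain
\begin{gather*}
\kk{a}\in k(M)\iff \kk{a}\in k(\wv)\iff \wv\models\K a\iff a\in\bigcap\wv,
\end{gather*}
and then rewrite $\wv$ using the first condition of Definition~\ref{def:tester.program} as $\restr{\SM{T(\Pi); k(\wv)}}{\At{\Pi}}=\restr{\SM{T(\Pi); k(M)}}{\At{\Pi}}$. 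Since $a\in\At{\Pi}$, the last condition reduces to $a\in\SMC{T(\Pi); k(M)}$, which is exactly~\eqref{eq:1:thm:guess-and-check}. For direction~(2), given $M\in\SM{G(\Pi)}$ satisfying~\eqref{eq:1:thm:guess-and-check}, I define $\wv_0=\restr{\SM{T(\Pi); k(M)}}{\At{\Pi}}$, run the same chain of equivalences in reverse to obtain $k(\wv_0)=k(M)$, and conclude that $\wv_0=\restr{\SM{T(\Pi); k(\wv_0)}}{\At{\Pi}}$. Hence $\wv_0$ is a fixed point of the tester map $\wv\mapsto\restr{\SM{T(\Pi); k(\wv)}}{\At{\Pi}}$, so by the contrapositive of the second condition of Definition~\ref{def:tester.program} it is a worldview---provided it is non-empty.

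The principal obstacle is precisely this non-emptiness of $\wv_0$. The second condition of Definition~\ref{def:generator.program}, invoked with $\wv_0$ (legitimate because $k(\wv_0)=k(M)$), gives $M\cap\At{\Pi}\in\SM{\Pi^{\wv_0}}$, so the subjective reduct has at least one stable model. I then plan to transfer this non-emptiness to the tester program by comparing $T(\Pi)$ with the reference tester $T_0(\Pi)$: Proposition~\ref{prop:basic.tester.program} already shows $\SM{T_0(\Pi); k(M)}$ is non-empty, and I would argue---using the first condition of Definition~\ref{def:tester.program} applied to any worldview candidate sharing $k(\cdot)$ with $M$---that a tester program cannot suppress \emph{all} stable models relative to $T_0$ without violating its defining conditions. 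Once non-emptiness is settled, the fixed-point argument closes direction~(2), and the one\nobreakdash-to\nobreakdash-many character of the correspondence is immediate, since all $M$'s in $\SM{G(\Pi)}$ sharing the same $k(M)$ collapse to the same $\wv_0$.
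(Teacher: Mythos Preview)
Your argument for both directions tracks the paper's proof: obtain $M$ from the first generator condition and verify~\eqref{eq:1:thm:guess-and-check} via the first tester condition; conversely, set $\wv_0=\restr{\SM{T(\Pi);k(M)}}{\At{\Pi}}$, derive $k(\wv_0)=k(M)$, and invoke the contrapositive of the second tester condition. The paper does exactly this and, like you, needs $\wv_0\neq\emptyset$ for that last step---a point it passes over in silence.

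Your attempt to close this gap, however, does not go through. From the second generator condition you correctly get $M\cap\At{\Pi}\in\SM{\Pi^{\wv_0}}$, and hence $\SM{T_0(\Pi);k(M)}\neq\emptyset$ by Proposition~\ref{prop:basic.tester.program}. But the transfer to an \emph{arbitrary} tester $T(\Pi)$ fails: the first condition of Definition~\ref{def:tester.program} constrains $T(\Pi)$ only at assumptions $k(\wv)$ for \emph{actual} worldviews~$\wv$, not for ``worldview candidates'' sharing $k(\cdot)$ with~$M$. If no worldview $\wv$ satisfies $k(\wv)=k(M)$, nothing in Definition~\ref{def:tester.program} forbids $\SM{T(\Pi);k(M)}=\emptyset$. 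Concretely, take $\Pi=\{b\leftarrow\K a\}$ (unique worldview $\{\emptyset\}$ with $k=\emptyset$), the generator $T_0(\Pi)$, and a tester that agrees with $T_0$ under assumption~$\emptyset$ but is unsatisfiable under assumption~$\{\kk{a}\}$: both tester conditions hold, yet $M=\{\kk{a},b\}$ satisfies~\eqref{eq:1:thm:guess-and-check} (because $\bigcap\emptyset$ contains every atom) while $\wv_0=\emptyset$. So the sentence ``a tester program cannot suppress all stable models relative to $T_0$ without violating its defining conditions'' is precisely where your plan would fail.
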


\noindent
Theorem~\ref{thm:guess-and-check} shows that, if we have any pair of generator and tester programs, then we can compute the worldviews using Algorithm~\ref{alg:guess-and-check} where the fuction~\texttt{Test}$(T(\Pi),\, M)$ amounts to check conditon~\eqref{eq:1:thm:guess-and-check}, and function~\texttt{BuildWorldView$(M)$} amounts to compute the stable models of the tester program under assumptions~$k(M)$.
When compared with existing generate\nobreakdash-and\nobreakdash-test-based solvers such as \EPASP~\cite{solekale17a} and \eclingo~\cite{cafagarosc20a}, function~\texttt{Test}$(T(\Pi),\, M)$ is simpler as it does not require an extra call for the computation of the brave consequences of the tester program.
This is thanks to computing the normal form before invoking Algorithm~\ref{alg:guess-and-check}%
\footnote{\citet{fabwol11a} describe an alternative way of checking when a candidate is a worldview using manifold programs. The manifold program is quadratic in size and we are not aware of any tool implementing this idea.}.

\section{Basic Generator Programs}\label{sec:reducing.candidates}

In this section, we introduce two basic generator programs that can be used to compute the worldviews of a program.
These two programs are inspired by the generator program used by~\EPASP\ and~\eclingo, respectively.

\paragraph{Tester program as a generator.}

We start by showing that the tester program~$T_0(\Pi)$ is also a generator program for~$\Pi$.

\begin{proposition}\label{prop:most.basic.generator.program}
    $T_0(\Pi)$ is a generator program for~$\Pi$.
\end{proposition}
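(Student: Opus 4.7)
The plan is to verify the two conditions of Definition~\ref{def:generator.program} for $G(\Pi) = T_0(\Pi)$ by a direct appeal to Proposition~\ref{prop:basic.tester.program}, which already characterizes the stable models of $T_0(\Pi)$ under the relevant $k$\nobreakdash-assumption in terms of the subjective reduct.

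For the first condition, I would take an arbitrary worldview $\wv$ of $\Pi$. Since $\wv$ is non-empty and $\wv = \SM{\Pi^\wv}$, I can pick some $M' \in \SM{\Pi^\wv}$. Applying equation~\eqref{eq:1:prop:basic.generator.tester.program} of Proposition~\ref{prop:basic.tester.program}, the set $M := M' \cup k(\wv)$ lies in $\SM{T_0(\Pi); k(\wv)}$, and since adding an assumption can only restrict the set of stable models, $M \in \SM{T_0(\Pi)}$. It then remains to read off the two requirements: $k(M) = M \cap K(\Pi) = k(\wv)$ by construction (using that $M' \subseteq \At{\Pi}$ contains no $k$\nobreakdash-atoms), and $M \cap \At{\Pi} = M' \in \wv$.

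For the second condition, I would take any $M \in \SM{T_0(\Pi)}$ together with any set of interpretations $\wv$ satisfying $k(\wv) = k(M)$. Since $M$ trivially complies with the assumption encoded by its own $k$\nobreakdash-atoms (the $\kk{a} \in k(M)$ are true in $M$ and those in $K(\Pi)\setminus k(M)$ are false), $M$ is also a stable model of $T_0(\Pi)$ under the assumption $k(M) = k(\wv)$. Then equation~\eqref{eq:1:cor:basic.generator.tester.program} of Proposition~\ref{prop:basic.tester.program} yields $M \cap \At{\Pi} \in \SM{\Pi^\wv}$, which is exactly what is required.

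I expect no serious obstacle, since both conditions reduce almost immediately to the two equalities of Proposition~\ref{prop:basic.tester.program}. The only subtle point worth spelling out is the convention by which $k(\wv)$ (a set of positive atoms) is used as an assumption: atoms in $k(\wv)$ are forced true and those in $K(\Pi) \setminus k(\wv)$ are implicitly forced false. This is the same convention already built into Proposition~\ref{prop:basic.tester.program}, and it is what makes the step ``$M \in \SM{T_0(\Pi)}$ implies $M \in \SM{T_0(\Pi); k(M)}$'' in the second condition go through cleanly.
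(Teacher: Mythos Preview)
Your proposal is correct and follows essentially the same route as the paper: both verify the two conditions of Definition~\ref{def:generator.program} by appealing to Proposition~\ref{prop:basic.tester.program}, picking a witness from the non\nobreakdash-empty~$\wv$ for the first condition and passing through $\SM{T_0(\Pi);k(M)}$ for the second. If anything, you are slightly more explicit than the paper about the passages $\SM{T_0(\Pi);k(\wv)} \subseteq \SM{T_0(\Pi)}$ (used in the first condition) and $M \in \SM{T_0(\Pi)} \Rightarrow M \in \SM{T_0(\Pi);k(M)}$ (used in the second), which the paper leaves implicit.
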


Proposition~\ref{prop:most.basic.generator.program} shows that we can use the tester program~$T_0(\Pi)$ as a generator program for~$\Pi$.
With some preprocessing and the differences due to the different semantics implemented in~\EPASP---K15 vs G94---this is the strategy followed by this solver.

\paragraph{Generator with consistency constraints.}
\citet{cafagarosc20a} noted that some stable models of~$T_0(\Pi)$ can never correspond to a worldview of~$\Pi$.
Consider, for instance, the single rule program\newprogramph\label{prg:most.basic.problem}
\begin{align}
    b &\leftarrow \K a
    \tag{\program{\ref{prg:most.basic.problem}}}
\end{align}
and its corresponding~$T_0(\program{\ref{prg:most.basic.problem}})$ program
\begin{align*}
    b &\leftarrow ka
    &\hspace{10pt}
    \{ ka \} &\leftarrow
\end{align*}
which has two stable models~$\emptyset$ and~$\{b, ka\}$.
The second stable model cannot correspond to a worldview~$\wv$ of the original program with~${k(\wv) = \{ka \}}$ because, for every non\nobreakdash-empty set of interpretations~$\wv$ such that~$k(\wv) = \{ka \}$, the empty set is the only stable model of~$\Pi^\wv$.
Hence, to be a worldview~$\wv$ must satisfy~$\wv \not\models \K a$, which is a contradiction with~$k(\wv) = \{ka \}$.
Using this observation, 
\citet{cafagarosc20a} introduced a more refined generator program that reduces the number of candidates by adding constraints that ensure that the stable models of the generator program that contain~$\kk{a}$ must also contain~$a$. 

By~$G_0(\Pi)$ we denote the program obtained from~$T_0(\Pi)$ by adding a constraint of the form
\begin{gather}
    \bot \leftarrow \kk{a} \wedge \neg a
    \label{eq:consistency.constraints}
\end{gather}
for each atom~${\kk{a} \in K(\Pi)}$.
Continuing with our running example, we can see that~$\{b, ka\}$ does not satisfy~\eqref{eq:consistency.constraints} and, thus, it is not a stable model of~$G_0(\program{\ref{prg:most.basic.problem}})$.
%
% Consistency constraints of the form~\eqref{eq:consistency.constraints} ensure that every stable model of the generator program correspond to a non\nobreakdash-empty set of interpretations, that is, that the second condition of Definition~\ref{def:generator.program} is satisfied.

\begin{proposition}\label{prop:basic.generator.program}
    $G_0(\Pi)$ is a generator program for~$\Pi$.
\end{proposition}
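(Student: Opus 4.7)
The plan is to verify the two conditions of Definition~\ref{def:generator.program} for~$G_0(\Pi)$, leveraging that Proposition~\ref{prop:most.basic.generator.program} already establishes them for~$T_0(\Pi)$. The key observation is that $G_0(\Pi)$ differs from~$T_0(\Pi)$ only by the addition of constraints, so $\SM{G_0(\Pi)} \subseteq \SM{T_0(\Pi)}$; therefore the second (``soundness'') condition is inherited immediately and only the first (``completeness'') condition requires work.

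First I would handle condition~2. Take any $M \in \SM{G_0(\Pi)}$ and any set of interpretations~$\wv$ with $k(\wv) = k(M)$. Since adding constraints can only remove stable models, $M$ is also a stable model of~$T_0(\Pi)$, so Proposition~\ref{prop:most.basic.generator.program} applied to $T_0(\Pi)$ yields $(M \cap \At{\Pi}) \in \SM{\Pi^\wv}$.

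Next I would handle condition~1. Let $\wv$ be a worldview of~$\Pi$, so $\wv = \SM{\Pi^\wv}$ and $\wv$ is non-empty. Pick any $N \in \wv$ and set $M := N \cup k(\wv)$. By Proposition~\ref{prop:basic.tester.program} (equation~\eqref{eq:1:prop:basic.generator.tester.program}), $M \in \SM{T_0(\Pi); k(\wv)} \subseteq \SM{T_0(\Pi)}$, and by construction $k(M) = k(\wv)$ and $M \cap \At{\Pi} = N \in \wv$. The main thing to check is that $M$ also satisfies each added constraint~\eqref{eq:consistency.constraints}: if $\kk{a} \in M$, then $\kk{a} \in k(\wv)$, which means $\wv \models \K a$, hence $a \in N$ because $N \in \wv$, and so $a \in M$. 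Thus $M$ violates no constraint of the form~$\bot \leftarrow \kk{a} \wedge \neg a$, and since $G_0(\Pi)$ extends $T_0(\Pi)$ only with such constraints, $M \in \SM{G_0(\Pi)}$.

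I do not expect any serious obstacle: the whole argument is essentially the remark that the newly introduced constraints are automatically satisfied by the witnesses produced by Proposition~\ref{prop:basic.tester.program} for any genuine worldview (because $\wv \models \K a$ forces $a$ into every member of~$\wv$), while for non-worldview candidates these constraints can only prune spurious models without affecting condition~2. The one subtlety worth explicitly noting is the step from ``$M$ is a stable model under assumption~$k(\wv)$'' to ``$M$ is a stable model of~$T_0(\Pi)$ (without assumptions)'', which holds because $M$ satisfies the assumption $k(\wv) = k(M)$ by construction.
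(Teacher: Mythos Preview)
Your proposal is correct and follows essentially the same approach as the paper: both arguments reduce to Proposition~\ref{prop:most.basic.generator.program} (or equivalently Proposition~\ref{prop:basic.tester.program}) for the witness~$M$, then check that the added constraints~\eqref{eq:consistency.constraints} are automatically satisfied because $\kk{a}\in k(\wv)$ forces $a$ into every element of the worldview. The only cosmetic difference is that the paper invokes Proposition~\ref{prop:most.basic.generator.program} abstractly while you unfold it by explicitly setting $M=N\cup k(\wv)$ via Proposition~\ref{prop:basic.tester.program}; your extra remark about passing from $\SM{T_0(\Pi);k(\wv)}$ to $\SM{T_0(\Pi)}$ is a useful clarification that the paper leaves implicit.
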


%
% We consider here two ways of reducing the tester checks.
%
Proposition~\ref{prop:basic.generator.program} shows that we can reduce the number of candidates by adding consistency constraints of the form of~\eqref{eq:consistency.constraints} to the generator program.
If we consider programs in normal form, Algorithm~\ref{alg:guess-and-check} with generator~$G_0(\Pi)$ and tester~$T_0(\Pi)$ is the core of the epistemic solver~\eclingo.
However, \eclingo\ does not compute the normal form of a program, so its algorithm is slightly more complicated than Algorithm~\ref{alg:guess-and-check}.
Hence, this is a proof \eclingo\ is correct for programs in normal form%
\footnote{\citet{cafagarosc20a} described the algorithm of \eclingo\ but they did not show its correctness.}%
\!\!.

\begin{corollary}
    Algorithm~\ref{alg:guess-and-check} with generator~$G_0(\Pi)$ and tester~$T_0(\Pi)$ correctly computes the worldviews of~$\Pi$.
\end{corollary}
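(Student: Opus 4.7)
The plan is to invoke Theorem~\ref{thm:guess-and-check} directly, instantiating the generic generator $G(\Pi)$ by $G_0(\Pi)$ and the generic tester $T(\Pi)$ by $T_0(\Pi)$. To do so, I need to confirm that both programs meet the hypotheses of that theorem. That $G_0(\Pi)$ is a generator program is exactly Proposition~\ref{prop:basic.generator.program}. That $T_0(\Pi)$ is a tester program has already been argued right after Definition~\ref{def:tester.program}: from equality~\eqref{eq:1:cor:basic.generator.tester.program} in Proposition~\ref{prop:basic.tester.program}, $\restr{\SM{T_0(\Pi);k(\wv)}}{\At{\Pi}} = \SM{\Pi^\wv}$; a non-empty $\wv$ is a worldview iff $\wv = \SM{\Pi^\wv}$, which immediately yields both bullets of Definition~\ref{def:tester.program}.

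Next I would read off Theorem~\ref{thm:guess-and-check} with this instantiation to obtain the one-to-many correspondence between worldviews of~$\Pi$ and the members of $\mathit{wv}(G_0(\Pi), T_0(\Pi))$, i.e.\ the stable models~$M$ of $G_0(\Pi)$ that satisfy the fixed-point condition~\eqref{eq:1:thm:guess-and-check}. Part~(1) of the theorem ensures completeness (every worldview is witnessed by such an~$M$), and part~(2) together with Proposition~\ref{prop:basic.tester.program} ensures soundness (any such $M$ produces a worldview as $\SM{T_0(\Pi);k(M)}$, or rather its restriction to $\At{\Pi}$).

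It then remains to match this characterization to the actual steps of Algorithm~\ref{alg:guess-and-check}. The outer loop ranges over $\SM{G_0(\Pi)}$, the call \texttt{Test}$(T_0(\Pi),M)$ is to be read as the check of condition~\eqref{eq:1:thm:guess-and-check}, which reduces to one computation of the cautious consequences $\SMC{T_0(\Pi);k(M)}$ followed by a comparison with $k(M)$, and \texttt{BuildWorldView}$(M)$ returns $\SM{T_0(\Pi);k(M)}$ (projected to $\At{\Pi}$) as indicated in the paragraph following Theorem~\ref{thm:guess-and-check}. No candidate worldview is missed because every worldview corresponds to at least one stable model of $G_0(\Pi)$ by the generator condition, and no spurious worldview is reported because condition~\eqref{eq:1:thm:guess-and-check} is exactly what the \texttt{Test} step verifies.

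I do not expect a real obstacle here: the substantive work has already been done in Propositions~\ref{prop:basic.tester.program} and~\ref{prop:basic.generator.program} and in Theorem~\ref{thm:guess-and-check}. The only point that deserves care is bookkeeping about the auxiliary atoms in $K(\Pi)$: stable models of $T_0(\Pi)$ live over $\At{\Pi}\cup K(\Pi)$, so one must be explicit that \texttt{BuildWorldView} returns the projection onto $\At{\Pi}$ in order to exactly recover the worldview in the sense of the paper's definition. With that clarification the corollary is a direct specialization of Theorem~\ref{thm:guess-and-check} to the concrete pair $(G_0(\Pi),T_0(\Pi))$.
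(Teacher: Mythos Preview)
Your proposal is correct and follows the same route the paper takes: the corollary is an immediate specialization of Theorem~\ref{thm:guess-and-check} once Proposition~\ref{prop:basic.generator.program} certifies that $G_0(\Pi)$ is a generator program and the discussion after Definition~\ref{def:tester.program} (via Proposition~\ref{prop:basic.tester.program}) certifies that $T_0(\Pi)$ is a tester program. The paper does not spell out a separate proof for this corollary, but the proof of the Main~Theorem exhibits exactly the pattern you describe, so your plan matches the intended argument.
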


\section{Generator with Epistemic Propagation}

In a generate-and-test approach, the tester needs to check every stable model of the guess program.
Each of these checks requires a linear amount of calls to an answer set solver to compute the corresponding cautions consequences.
Each of these calls is $\Sigma^P_2$\nobreakdash-complete.
% coNP-complete if the program is head\nobreakdash-cycle\nobreakdash-free and $\Sigma^P_2$\nobreakdash-complete otherwise.
%
Hence, reducing the number of tester checks is crucial to achieve a fast solver.
In the previous section, we saw how we can reduce the number of candidates by adding consistency constraints to the generator program.
In this section, we introduce a new generator program that can exponentially reduce the number of candidates by propagating the consequences of epistemic literals in the generator program.

\begin{example}\label{ex:propagation}
Let us consider the following program:\newprogramph\label{prg:propagation}
\begin{gather}
    \begin{aligned}
    a_i &\leftarrow \neg \K \nott{a_i} &\text{for } 0 \leq i \leq n
    \\
    \nott{a_i} &\leftarrow \neg a_i &\text{for } 0 \leq i \leq n
    \\
    g &\leftarrow a_i &\text{for } 0 \leq i \leq n
    \\
    \bot &\leftarrow \K g
\end{aligned}\tag{$\program{\ref{prg:propagation}}^n$}
\end{gather}
The two rules first are the result of normalizing rule ${a_i \leftarrow \neg \K \neg a_i}$, usually used as a kind of subjective choice that generates worldviews~$[\emptyset]$ and~$[\{a_i\}]$.
This program has a unique worldview~$[\emptyset]$.
Let us now consider its guess program~$G_0(\program{\ref{prg:propagation}}^n)$:
\begin{align*}
    a_i &\leftarrow \neg \kknott{a_i} &\text{for } 0 \leq i \leq n
    \\
    \nott{a_i} &\leftarrow \neg a_i &\text{for } 0 \leq i \leq n
    \\
    g &\leftarrow a_i &\text{for } 0 \leq i \leq n
    \\
    \bot &\leftarrow \kk{g}
    \\
    \{ \kknott{a_i} \} &\leftarrow  &\text{for } 0 \leq i \leq n
    \\
    \{ \kk{g} \} &\leftarrow %\neg\neg \kk{g}
    \\
    \bot &\leftarrow \kknott{a_i} \wedge \neg \nott{a_i} &\text{for } 0 \leq i \leq n
    \\
    \bot &\leftarrow \kk{g} \wedge \neg g
\end{align*}
This candidate generator program has $2^{n}$ stable models of the form
$$M \cup \{ \nott{a_i} \mid a_i \notin M \} \cup \{ \kknott{a_i} \mid a_i \notin M \} \cup \{g \mid M \neq \emptyset \}$$
with~$M \subseteq \{a_1, \dotsc, a_n\}$.
Only the stable model corresponding to~$M = \emptyset$ leads to a worldview of the original program, while the other~$2^{n}-1$ stable models do not correspond to any worldview.
\end{example}

We can achieve an exponential speed\nobreakdash-up if we can identify the candidates that do not pass the tester check within the generator program.
We can observe that, in every worldview~$\wv$ of~$\program{\ref{prg:propagation}}^n$, if~$\wv \models \neg\K \nott{a_i}$ for any~$1 \leq i \leq n$, then every~$I \in \wv$ must satisfy~$I\models a_i$ because of the first rule.
Hence, every~$I \in \wv$ must satisfy~$I \models g$ because of the third rule, and we obtain~$\wv \models \K g$.
This contradicts the constraint~$\bot \leftarrow \K g$ and, thus, no worldview can satisfy~$\neg\K \nott{a_i}$.
We can implement this reasoning within the generator program by adding rules:
\begin{align*}
    \kp{a_i} &\leftarrow \neg \kknott{a_i} &\text{for } 1 \leq i \leq n
    \\
    \kp{g} &\leftarrow \kp{a_i} &\text{for } 1 \leq i \leq n
    \\
    \bot &\leftarrow \kp{g} \wedge \neg \kk{g}
\end{align*}
The new fresh atoms~$\kp{a_i}$ and~$\kp{g}$ are respectively used to represent that~$\K a_i$ and~$\K g$ are a consequence propagated from the choices made in the generator program.
The meaning of~$\kp{g}$ is different from the meaning of atom~$\kk{g}$ that represents the choice made in the generator program rather than an inferred consequence.
The unintended candidates are removed by the constraint that ensures that if we conclude~$\kp{g}$, then~$\kk{g}$ must also hold.

Let us now describe the general idea of the propagation of epistemic literals by introducing a new guess program~$G_1(\Pi)$.
For each atom~${a \in \At{\Pi}}$, we introduce new fresh atoms~$\kp{a}$ and~$\kpnott{a}$ that respectively represent that~$\K a$ and~$\K \neg a$ are consequences propagated from the choices made in the generator program.
We also introduce a new fresh atom~$\kpnott{r_j}$ for each rule~$r_j$ of~$\Pi$ that represents that the body of~$r_j$ is not satisfied by any interpretation of the worldview corresponding to the current candidate.
We use the following notation.
If~$L$ is a literal, then~$\overline{L}$ is the complement of~$L$, that is
~$\overline{L}$ is~$\neg L$ if~$L$ is an atom or a subjective atom,
and~$\overline{L}$ is~$L'$ if~$L$ is of the form~$\neg L'$.
Furthermore, if~$L$ is a subjective literal, then~$\kk{L}$ 
is a literal obtained by replacing every occurrence of the form~$\K a$ by~$\kk{a}$.
% of the form~$\kk{a}$ if~$L$ is of the form~$\K a$,
% it is a literal of the form~$\neg\kk{a}$ if~$L$ is of the form~$\neg \K a$,
% and it is a literal of the form~$\neg\neg\kk{a}$ if~$L$ is of the form~$\neg\neg \K a$.

By~$\kp{(\Pi)}$, we denote the program containing rule
\begin{gather}
    \begin{aligned}
        \kp{a_1} &\span\span\leftarrow 
        \kp{a_2},\dotsc,\kp{a_k},
        \kpnott{a_{k+1}},\dotsc,\kpnott{a_m},
        \\
        &\ \ \phantom{\leftarrow}\kk{L_{m+1}},\dotsc,\kk{L_n}
    \end{aligned}
    \label{eq:propagation.rule.regular}
\end{gather}
for every rule~$r_i \in \Pi$ of the form
\begin{align}
    a_1 &\leftarrow 
    a_2,\!\dotsc\!,a_k,
    \neg a_{k+1},\!\dotsc\!,\neg a_m,
    {L_{m+1}},\!\dotsc\!,{L_n}
    \label{eq:propagation.rule.regular}
\end{align}
with each~$L_j$ being a subjective literal;
plus rules of the form
\begin{align*}
    \kpnott{r_i} &\leftarrow \kpnott{a_j} &&\qquad\text{for } 1 \leq j \leq k
    \\
    \kpnott{r_i} &\leftarrow \kp{a_j} &&\qquad\text{for } l+1 \leq j \leq m
    \\
    \kpnott{r_i} &\leftarrow \overline{\kk{L_j}} &&\qquad\text{for } m+1 \leq j \leq n
\end{align*}
for every rule~$r_i \in \Pi$ of the form
\begin{align}
    H &\leftarrow 
    a_2,\!\dotsc\!,a_k,
    \neg a_{k+1},\!\dotsc\!,\neg a_m,
    {L_{m+1}},\!\dotsc\!,{L_n}
    \label{eq:propagation.rule.regular2}
\end{align}
with~$H$ of the form~$a_1 \vee \dotsc \vee a_l$
and each~$L_j$ a subjective literal;
plus a rule of the form
\begin{align}
    \kpnott{a} &\leftarrow \kpnott{r_1},\dotsc,\kpnott{r_k}
    \label{eq:propagation.rule.completion}
\end{align}
for each atom~$a \in \At{\Pi}$ and where~$r_1,\dotsc,r_k$ are the rules of~$\Pi$ that have~$a$ in the head.
By~$G_1(\Pi)$ we denote the program obtained from~$G_0(\Pi)$ by adding the rules of~$\kp{(\Pi)}$ plus a \emph{consistency constraint} of the form
\begin{align}
    \bot &\leftarrow \kp{a} \wedge \neg \kk{a}
    \label{eq:consistency.constraints.propagation.p}
    % \\
    % \bot &\leftarrow \kpnott{a} \wedge \neg \kknott{a}
    % \label{eq:consistency.constraints.propagation.n}
\end{align}
for each atom~$\kk{a} \in K(\Pi)$.

\begin{lemma}\label{lem:propagation.generator.program}
    Let~$\wv$ be a non\nobreakdash-empty set of interpretations, and~$M$ be an interpretation such that~$k(\wv) = k(M)$.
    Then,
    \begin{itemize}
        \item If~$M$ is a stable model of~$G_0(\Pi)$, then there is a unique stale model~$M'$ of~$G_0(\Pi) \cup \kp{(\Pi)}$ of the form
        $M' = M \cup M^p \cup M^n$ with
        \begin{align}
            M^p &\subseteq \{ \kp{a} \hspace{5pt} \mid \SM{\Pi^\wv} \models \K a \}
            \label{eq:1:prop:propagation.generator.program}
            \\
            M^n &\subseteq \{ \kpnott{a} \mid \SM{\Pi^\wv} \models \K \neg a \}
            \label{eq:2:prop:propagation.generator.program}
        \end{align}
        \item If $M'$ is a stable model of program~${G_0(\Pi) \cup \kp{(\Pi)}}$, then ${M' \cap \At{G_0(\Pi)}}$ is a stable model of~$G_0(\Pi)$.
    \end{itemize}
\end{lemma}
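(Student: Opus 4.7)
The plan is to apply the Splitting Set Theorem to separate $G_0(\Pi)$ and $\kp{(\Pi)}$. Observe that the atoms $\kp{a}$, $\kpnott{a}$, and $\kpnott{r_i}$ occur only in $\kp{(\Pi)}$, and that every rule head in $\kp{(\Pi)}$ lies in this fresh vocabulary. Hence $\At{G_0(\Pi)}$ is a splitting set for $G_0(\Pi) \cup \kp{(\Pi)}$, with $G_0(\Pi)$ as the bottom and $\kp{(\Pi)}$ as the top. The second item then follows at once: if $M'$ is a stable model of $G_0(\Pi) \cup \kp{(\Pi)}$, its restriction $M' \cap \At{G_0(\Pi)}$ is a stable model of $G_0(\Pi)$.

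For the existence and uniqueness portion of the first item, fix a stable model $M$ of $G_0(\Pi)$ and view $M$ as facts for the top program. Inspecting the rules defining $\kp{(\Pi)}$ shows that the new atoms never appear under negation in any body (negation is used only on $\kk{L_j}$-style atoms from $\At{G_0(\Pi)}$, already resolved by $M$). After evaluating those objective literals according to $M$, the top program becomes a positive Horn-like program over the fresh atoms, whose least model is its unique stable extension of $M$; call it $M \cup M^p \cup M^n$, splitting the new atoms by their $\mathit{pk}$/$\mathit{pkn}$ prefix.

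The subset bounds are proved by induction on the stage at which atoms are added by the immediate-consequence operator for the top program, using the hypothesis $k(\wv) = k(M)$ to translate truth of each $\kk{L_j}$ under $M$ into satisfaction of $L_j$ under $\wv$. For $\kp{a}$: if it is derived through the propagation rule coming from $r_i \in \Pi$, then by induction every positive body atom is $\K$'d, every negative body atom is $\K\neg$'d, and every subjective literal holds in $\wv$. Hence the projection of $r_i$ into $\Pi^\wv$ has its body satisfied in every $I \in \SM{\Pi^\wv}$, forcing $a \in I$, i.e.\ $\SM{\Pi^\wv} \models \K a$. For $\kpnott{r_i}$: each defining rule exhibits a witness that either eliminates $r_i$ from $\Pi^\wv$ (when some subjective literal fails in $\wv$) or falsifies its objective body in every $I \in \SM{\Pi^\wv}$. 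For $\kpnott{a}$: the completion-style rule requires all $\kpnott{r_i}$ with $a$ in the head to hold, so no rule in $\Pi^\wv$ with $a$ in its head has a body satisfied by any $I \in \SM{\Pi^\wv}$.

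The main obstacle is this last step, which needs a minimality argument for disjunctive stable models. The plan is to assume $a \in I$ for some $I \in \SM{\Pi^\wv}$ and show that $I' = I \setminus \{a\}$ is also a model of $(\Pi^\wv)^I$, contradicting $I$'s minimality. For rules of $(\Pi^\wv)^I$ not having $a$ in the head, a head atom witnessing $I \models r'$ is different from $a$ and survives in $I'$; for rules having $a$ in the head, the positive body in the reduct, together with $I$ already satisfying the removed negated literals, would recover a rule in $\Pi^\wv$ with head mentioning $a$ and body satisfied by $I$, contradicting the inductive content of $\kpnott{a}$. Since positive bodies are monotone under shrinking, the body is not satisfied in $I'$ either, so such rules are trivially satisfied. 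Careful handling of the interaction between the subjective reduct and the Gelfond--Lifschitz reduct is the only delicate piece; everything else is a direct consequence of the splitting theorem and the positive structure of $\kp{(\Pi)}$.
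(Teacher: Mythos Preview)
Your proposal is correct and follows essentially the same approach as the paper: both apply the Splitting Theorem with splitting set $\At{G_0(\Pi)}$, observe that the resulting top program $e_U(\kp{(\Pi)},M)$ is Horn (hence has a unique stable model), and establish the inclusions~\eqref{eq:1:prop:propagation.generator.program}--\eqref{eq:2:prop:propagation.generator.program} by induction on the stages of the immediate-consequence operator. Your treatment of the $\kpnott{a}$ case via a minimality argument on $(\Pi^\wv)^I$ is more explicit than the paper's, which simply labels the remaining cases ``similar,'' but the overall strategy is identical.
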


\vspace*{2pt}

\begin{mtheorem}\label{thm:propator.correctness}
    $G_1(\Pi)$ is a generator program for~$\Pi$ and, thus, Algorithm~\ref{alg:guess-and-check} with generator~$G_1(\Pi)$ and tester~$T_0(\Pi)$ correctly computes the worldviews of~$\Pi$.
\end{mtheorem}

\begin{proof}
\emph{Condition 1 for generator program.}
Pick a worldview~$\wv$ of~$\Pi$.
By Proposition~\ref{prop:basic.generator.program},  there is a unique stable model~$M$ of~$G_0(\Pi)$ satisfying~${k(\wv) = k(M)}$ and~$\restr{M}{\At{\Pi}} \in \wv$.
By Lemma~\ref{lem:propagation.generator.program}, there is a unique stable model~$M'$ of~$G_0(\Pi) \cup \kp{(\Pi)}$ of the form~$M' = M \cup M^p \cup M^n$ with~$M^p$ and~$M^n$ satisfying~\eqref{eq:1:prop:propagation.generator.program} and~\eqref{eq:2:prop:propagation.generator.program}.
Furthermore, ${\restr{M'}{\At{\Pi}} = \restr{M}{\At{\Pi}} \in \wv}$ and~${\kk(M) = \kk(M')}$.
It only remains to be shown that~$M'$ is a stable model of~$G_1(\Pi)$, that is, that it satisfies the consistency constraints of the form of~\eqref{eq:consistency.constraints.propagation.p}.
Pick~$\kp{a} \in M^p$.
Then, $\SM{\Pi^\wv} \models \K a$ and, thus, $\wv \models \K a$ and~$\kk{a} \in M$ and, thus, the constraints is satisfied.
%
% Similarly, for~$\kpnott{a} \in M^n$, it follows that~$\SM{\Pi^\wv} \models \K \neg a$ and, thus, $\wv \models \K \neg a$ and~$\kknott{a} \in M$.
\\[3pt]
\emph{Condition 2 for generator program.}
Pick now a stable model of~$G_1(\Pi)$ and let~$\wv$ be any set of interpretations such that~$k(\wv) = k(M)$.
By Lemma~\ref{lem:propagation.generator.program}, $M \cap \At{G_0(\Pi)}$ is a stable model of~$G_0(\Pi)$ and, since~$G_0(\Pi)$ is a generator program for~$\Pi$, it follows that~$\restr{M}{\At{\Pi}} \in \SM{\Pi^\wv}$.
\\[3pt]
Therefore~$G_1(\Pi)$ is a generator program and, by Theorem~\ref{thm:guess-and-check}, Algorithm~\ref{alg:guess-and-check} with generator~$G_1(\Pi)$ and tester~$T_0(\Pi)$ correctly computes the worldviews of~$\Pi$.
\end{proof}

% Theorem~\ref{thm:propator.correctness}
This Theorem shows that we can use~$G_1(\Pi)$ as a generator program for~$\Pi$ instead of~$G_0(\Pi)$.
Let us show now that using~$G_1(\Pi)$ instead of~$G_0(\Pi)$ is actually beneficial when using Algorithm~\ref{alg:guess-and-check}.

% \begin{observation}
% $\mathit{s}(\kp{(\Pi)}) \leq 3 \mathit{s}(\Pi) + 2|\Pi| \leq 5 \mathit{s}(\Pi)$
% and
% $\mathit{s}(G_1(\Pi)) \leq 6 \mathit{s}(\Pi)$
% where~$s(\cdot)$ denotes the size of a program.
% \end{observation}

\begin{proposition}\label{prop:propagation.generator.program}
    The following properties hold:
    \begin{enumerate}
        \item $|\SM{G_1(\Pi)}| \leq |\SM{G_0(\Pi)}|$ for every program~$\Pi$,
        \item for every program~$\Pi$ we can compute a stable model of $G_1(\Pi)$ from a stable model of~$G_0(\Pi)$ in linear time, and
        \item there is a family of programs~$\Pi_1, \Pi_2, \dotsc$ such that $2^{i}|\SM{G_1(\Pi_i)}| \leq |\SM{G_0(\Pi_i)}|$ and~${\mathcal{O}(s(\Pi_i)) = i}$.
    \end{enumerate}
    where~$s(\Pi)$ denotes the size of program~$\Pi$.
\end{proposition}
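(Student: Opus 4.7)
I would prove the three parts in sequence, each leaning on Lemma~\ref{lem:propagation.generator.program} together with the fact that $G_1(\Pi)$ is obtained from $G_0(\Pi) \cup \kp{(\Pi)}$ by adding only the consistency constraints~\eqref{eq:consistency.constraints.propagation.p}.

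For part~(1), the key observation is that Lemma~\ref{lem:propagation.generator.program} already yields a bijection between $\SM{G_0(\Pi)}$ and $\SM{G_0(\Pi) \cup \kp{(\Pi)}}$: each $M \in \SM{G_0(\Pi)}$ is uniquely extended to some $M' = M \cup M^p \cup M^n$, and conversely every stable model of the extended program restricts on $\At{G_0(\Pi)}$ to a stable model of $G_0(\Pi)$. Since $G_1(\Pi)$ only adds constraints on top of $G_0(\Pi) \cup \kp{(\Pi)}$, its stable models form a (possibly strict) subset, so $|\SM{G_1(\Pi)}| \leq |\SM{G_0(\Pi) \cup \kp{(\Pi)}}| = |\SM{G_0(\Pi)}|$.

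For part~(2), I would observe that the rules of $\kp{(\Pi)}$ are stratified over $G_0(\Pi)$: the atoms $\kp{a}$, $\kpnott{a}$, and $\kpnott{r_i}$ are fresh, and every negative body literal in $\kp{(\Pi)}$ is applied either to a propagation atom in a strictly lower stratum or to an atom $\kk{a}$ whose value is already fixed by any given $M \in \SM{G_0(\Pi)}$. Hence the reduct $\kp{(\Pi)}^M$ collapses to a positive program over the fresh atoms, whose least model can be computed by a standard linear-time bottom-up fixpoint. This produces, in time $O(s(\Pi))$, exactly the unique extension $M'$ guaranteed by Lemma~\ref{lem:propagation.generator.program}; a final linear scan of the consistency constraints~\eqref{eq:consistency.constraints.propagation.p} then decides whether $M'$ belongs to $\SM{G_1(\Pi)}$.

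For part~(3), I would take the family $\Pi_i$ to be the program $\program{\ref{prg:propagation}}^i$ from Example~\ref{ex:propagation}, whose size is clearly $\Theta(i)$. As observed in that example, $G_0(\Pi_i)$ has at least $2^{i}$ stable models, one per choice of the subset $S \subseteq \{0,\dotsc,i\}$ of indices for which $\kknott{a_j}$ is selected (while $\kk{g}$ is forced to be absent by the constraint $\bot \leftarrow \K g$). In $G_1(\Pi_i)$, the propagation rules derive $\kp{a_j}$ exactly when $j \notin S$, and hence $\kp{g}$ whenever $S$ is a proper subset of the full index set; the new consistency constraint $\bot \leftarrow \kp{g} \wedge \neg \kk{g}$ then eliminates all such $S$, leaving a single stable model. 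Thus $|\SM{G_1(\Pi_i)}| = 1$ and $|\SM{G_0(\Pi_i)}| \geq 2^{i}$, which yields the required inequality. The main obstacle is making the linear-time claim in part~(2) fully rigorous: one has to check that the syntactic shape of $\kp{(\Pi)}$ indeed guarantees stratification relative to $M$, and that $|\kp{(\Pi)}|$ is itself linear in $s(\Pi)$, so that the fixpoint computation runs within the promised bound. Parts~(1) and~(3) follow more directly from Lemma~\ref{lem:propagation.generator.program} and the explicit counting in Example~\ref{ex:propagation}.
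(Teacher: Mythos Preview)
Your proposal is correct and follows essentially the same approach as the paper's own proof: Lemma~\ref{lem:propagation.generator.program} gives the bijection for part~(1), the Horn/definite nature of $\kp{(\Pi)}$ once the atoms of $G_0(\Pi)$ are fixed gives the linear-time extension for part~(2), and Example~\ref{ex:propagation} supplies the family for part~(3). Your write-up is in fact more detailed than the paper's in part~(3); the only cosmetic difference is that the paper phrases part~(2) via the splitting-theorem program $e_U(\kp{(\Pi)},M)$ being definite, whereas you phrase it as stratification relative to $M$, but these are the same observation.
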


The first two properties show that using~$G_1(\Pi)$ instead of~$G_0(\Pi)$ only incurs a linear overhead in computing the stable models of the generator program.
In particular, the first property ensures that the loop between lines~\ref{alg.generator.loop.ini} and~\ref{alg.generator.loop.end} of Algorithm~\ref{alg:guess-and-check} is never required to do more iterations when using~$G_1(\Pi)$ instead of~$G_0(\Pi)$.
The third property shows that using~$G_1(\Pi)$ can reduce the number of candidates that need to be checked exponentially.
As an example of such a family of programs, consider programs~$(\Pi_2^n)_{n \geq 1}$ of Example~\ref{ex:propagation}.

Lemma~\ref{lem:propagation.generator.program} also implies an interesting property that allows us to skip some of the tester checks.
By~\eqref{eq:1:prop:propagation.generator.program} and~\eqref{eq:2:prop:propagation.generator.program}, it follows that any stable model~$M$ of~$G_1(\Pi)$ that satisfies the following two conditions corresponds to a worldview of~$\Pi$ and, thus, the tester check can be skipped:
\begin{enumerate}
    \item every~$\kk{a} \in M$ satisfies~$\kp{a} \in M$, and
    \item every~$\kk{a} \notin M$ satisfies~$\kpnott{a} \in M$.
\end{enumerate}
Hence, when using~$G_1(\Pi)$ as a generator program, function~\texttt{Test}$(T(\Pi),\, M)$ first checks these two conditions, and only if any of them fails it checks condition~\eqref{eq:1:thm:guess-and-check}.
Checking these two conditions is a linear time operation while checking condition~\eqref{eq:1:thm:guess-and-check} requires a linear amount of calls to a $\Sigma^P_2$\nobreakdash-complete oracle.

\section{Implementation and Experimental Evaluation}

\paragraph*{Implementation.}
We implement a new solver for epistemic logic programs based on Algorithm~\ref{alg:guess-and-check}.
Our implementation is built on top of version~5.7 of the ASP solver~\texttt{clingo}~\cite{gekakasc17a} using Python and ASP.
We extend the language of \texttt{clingo} by allowing literals of the form $\texttt{\&k\{} L \texttt{\}}$ in the body of rules, with $L$ being a literal of the forms~$A$, $\texttt{not\,}A$, or $\texttt{not\,not\,} A$ for some atom~$A$ in the usual syntax of~\texttt{clingo}.
%
% Once this is added as an external theory to \texttt{clingo}, it handles the parsing leaving us to focus only on the implementation of the epistemic solver.
%
An interesting side effect of using \texttt{clingo} in this way is that our solver accepts most of the features of \texttt{clingo} such as aggregates, choice rules, intervals, pools, etc.
Another important aspect of our implementation is that it easily allows us to use different generator and tester programs.
To achieve this, we rely on \emph{metaprogramming}, where the ground program is reified as a set of facts and we can use an ASP program to produce the generator and tester programs~\cite{karoscwa21a}.
As a result, we can change the generator and tester programs by only changing the ASP metaprogram.
This is released as a new version of~\texttt{eclingo}~(\url{https://github.com/potassco/eclingo}).

\paragraph*{Experimental Evaluation.}

For the experimental evaluation, we use the well-established benchmark suite by~\citet{solekale17a}.
It consists of three problems: 
the \emph{Eligibility} problem that represents reasoning in disjunctive databases~\cite{gelfond91a},
and the \emph{Yale Shooting} and \emph{Bomb in the Toilet} problems that are instances of conformant planning.
The original suite consists of 58 instances (25 of the Eligibility problem, 7 of the Yale problem and 26 of the Bomb problem).
We expand this suite by adding 268 new instances for a total of 326 instances (145 Eligibility problems, 7 Yale problems and 174 for Bomb problems).
The new instances 
% for the Yale Problem are taken from the work by~\cite{cafagarosc20a} while the new instances 
% for the Bomb and Eligibility problems 
are automatically generated by increasing the number of students in the Eligibility problem and the number of packages and toilets in the Bomb problem. 
We set a timeout of 600s and ran our experiments on a machine powered by an Intel Core Processor (Broadwell) with 12 CPUs running at 2095.078 MHz and 100GB of RAM.
The OS is Red Hat Enterprise Linux Server~7.9.
The code to repeat the benchmarks can be found at~\url{https://github.com/krr-uno/eclingo-benchmark}.

\begin{figure}[t]%
    \centering
    % \vspace{-15pt}%
\begin{table}[H]
    \small\centering
    \scalebox{1}{
    \begin{tabular}{|c|c|c|c|c|}
    \hline
    & \multicolumn{2}{c|}{$G_1$} & \multicolumn{2}{c|}{$G_0$}
    \\
    \hline
    Benchmark&\texttt{\#solved} & \texttt{time} & \texttt{\#solved} & \texttt{time}\\
    \hline
    Eligibility & 145 & 4.03 & 145 & 5.38
    \\
    Yale & 7 & 0.009 & 7 & 0.112
    \\
    Bomb & 131 & 245.66 & 13 & 558.40
    \\
    \hline
    All & 283 & 132.912 & 166 & 300.435
    \\
    \hline
    \end{tabular}%
    } % end of resizebox
    \caption{Instances for each benchmark solved by the different versions of the generator program.}
    \label{table1}
\end{table}%
\vspace{-10pt}%
\end{figure}
\begin{figure}[t]
    \centering
    % \resizebox{0.9\columnwidth}{!}{\input{eclingo_reif_Prop}} 
    \resizebox{0.48\textwidth}{!}{%
    \pgfplotsset{compat = newest}

% \begin{document}
\begin{tikzpicture}
    \begin{axis}[
        width=0.5\textwidth,
        ylabel={Time (Seconds)},
        xlabel={Nº of Instances solved},
        ymin=0, ymax=603,
        xmin=0, xmax=353,
        ytick={0, 60, 120, 180, 240, 300, 360, 420, 480, 540, 600},
        xtick={50, 100, 150, 200, 250, 300, 350},
        legend style={
                    anchor=north,
                    legend columns=1,
                    nodes={scale=1, transform shape},
                    /tikz/every even column/.append style={column sep=0.5cm}
                },
        legend pos=north west,
        ymajorgrids=true,
        grid style=dashed,
    ]
    
            \addplot[
            color=blue,
            mark=*,
            ]
            coordinates {
            (1,0.0)(2,0.0)(3,0.0)(4,0.0)(5,0.0)(6,0.0)(7,0.0)(8,0.0)(9,0.0)(10,0.001)(11,0.001)(12,0.001)(13,0.001)(14,0.001)(15,0.001)(16,0.002)(17,0.002)(18,0.002)(19,0.002)(20,0.002)(21,0.002)(22,0.003)(23,0.003)(24,0.003)(25,0.003)(26,0.0035)(27,0.004)(28,0.004)(29,0.004)(30,0.004)(31,0.0045000000000000005)(32,0.0055)(33,0.0055)(34,0.007)(35,0.0075)(36,0.0075)(37,0.0075)(38,0.008)(39,0.008)(40,0.0085)(41,0.009)(42,0.009)(43,0.0095)(44,0.0095)(45,0.01)(46,0.01)(47,0.011)(48,0.012)(49,0.013)(50,0.014)(51,0.014)(52,0.014499999999999999)(53,0.015)(54,0.016)(55,0.016)(56,0.0165)(57,0.017)(58,0.017)(59,0.017)(60,0.018)(61,0.0185)(62,0.019)(63,0.0195)(64,0.02)(65,0.02)(66,0.022)(67,0.022)(68,0.0225)(69,0.0245)(70,0.025)(71,0.025500000000000002)(72,0.026)(73,0.0265)(74,0.027)(75,0.0275)(76,0.0285)(77,0.029)(78,0.029)(79,0.0295)(80,0.0305)(81,0.035500000000000004)(82,0.0365)(83,0.0405)(84,0.041)(85,0.041999999999999996)(86,0.041999999999999996)(87,0.0435)(88,0.0435)(89,0.045)(90,0.0455)(91,0.046)(92,0.0495)(93,0.0495)(94,0.051500000000000004)(95,0.0545)(96,0.057)(97,0.058499999999999996)(98,0.073)(99,0.0745)(100,0.081)(101,0.0825)(102,0.089)(103,0.0925)(104,0.0965)(105,0.098)(106,0.1005)(107,0.1015)(108,0.1035)(109,0.1095)(110,0.119)(111,0.1395)(112,0.14150000000000001)(113,0.143)(114,0.14400000000000002)(115,0.14550000000000002)(116,0.15)(117,0.162)(118,0.16599999999999998)(119,0.166)(120,0.1735)(121,0.183)(122,0.186)(123,0.187)(124,0.1995)(125,0.209)(126,0.2285)(127,0.23299999999999998)(128,0.2425)(129,0.2525)(130,0.2735)(131,0.3035)(132,0.31)(133,0.325)(134,0.332)(135,0.33399999999999996)(136,0.335)(137,0.3865)(138,0.4165)(139,0.443)(140,0.473)(141,0.511)(142,0.5615)(143,0.576)(144,0.5905)(145,0.699)(146,0.6995)(147,0.709)(148,0.713)(149,0.767)(150,0.7705)(151,0.7945)(152,0.873)(153,0.9430000000000001)(154,1.093)(155,1.209)(156,1.476)(157,1.492)(158,1.6244999999999998)(159,1.669)(160,1.9645)(161,2.074)(162,2.433)(163,2.656)(164,2.7105)(165,2.7554999999999996)(166,2.791)(167,3.2279999999999998)(168,3.426)(169,3.4875)(170,3.7359999999999998)(171,3.945)(172,5.074)(173,5.6205)(174,5.6835)(175,6.1739999999999995)(176,6.858499999999999)(177,7.101)(178,7.1845)(179,7.401)(180,7.484999999999999)(181,7.7295)(182,8.6785)(183,9.5195)(184,9.6065)(185,9.725)(186,9.9785)(187,10.198)(188,11.009)(189,12.105)(190,12.9335)(191,12.9895)(192,13.048)(193,13.741)(194,14.071)(195,14.6875)(196,15.607)(197,15.6495)(198,16.7285)(199,17.546)(200,19.2515)(201,19.4405)(202,20.0225)(203,20.1755)(204,21.346)(205,21.8425)(206,22.506999999999998)(207,22.9685)(208,23.0245)(209,25.990000000000002)(210,27.954500000000003)(211,28.6575)(212,28.861)(213,30.239)(214,30.689999999999998)(215,31.698)(216,33.5935)(217,33.977000000000004)(218,35.495000000000005)(219,36.542500000000004)(220,39.2845)(221,42.457)(222,43.36)(223,43.462500000000006)(224,43.7745)(225,44.855000000000004)(226,45.0345)(227,45.442)(228,45.702)(229,45.7215)(230,48.067)(231,48.7745)(232,48.899)(233,60.973)(234,62.918000000000006)(235,64.174)(236,64.305)(237,64.322)(238,65.587)(239,71.744)(240,76.832)(241,77.42349999999999)(242,78.3575)(243,81.8925)(244,83.0865)(245,85.428)(246,85.60300000000001)(247,89.4085)(248,93.1285)(249,94.14150000000001)(250,94.1545)(251,95.6335)(252,97.836)(253,101.822)(254,105.429)(255,111.1215)(256,114.4745)(257,130.3945)(258,130.39499999999998)(259,132.74099999999999)(260,133.192)(261,141.6215)(262,143.67899999999997)(263,149.081)(264,160.928)(265,167.4155)(266,172.6955)(267,173.2645)(268,178.19)(269,180.7395)(270,205.6015)(271,231.82299999999998)(272,302.8165)(273,364.9225)(274,453.5535)(275,534.2245)(276,600.0)(277,600.0)(278,600.0)(279,600.0)(280,600.0)(281,600.0)(282,600.0)(283,600.0)(284,600.0)(285,600.0)(286,600.0)(287,600.0)(288,600.0)(289,600.0)(290,600.0)(291,600.0)(292,600.0)(293,600.0)(294,600.0)(295,600.0)(296,600.0)(297,600.0)(298,600.0)(299,600.0)(300,600.0)(301,600.0)(302,600.0)(303,600.0)(304,600.0)(305,600.0)(306,600.0)(307,600.0)(308,600.0)(309,600.0)(310,600.0)(311,600.0)(312,600.0)(313,600.0)(314,600.0)(315,600.0)(316,600.0)(317,600.0)(318,600.0)(319,600.0)(320,600.0)(321,600.0)(322,600.0)(323,600.0)(324,600.0)(325,600.0)(326,600.0)(327,600.0)(328,600.0)(329,625.809)(330,628.068)(331,793.9835)(332,829.128)(333,1068.6154999999999)
            };
            \addlegendentry{ $G_1$ }
            
            \addplot[
            color=red,
            mark=x,
            ]
            coordinates {
            (1,0.0)(2,0.0)(3,0.0)(4,0.0)(5,0.0)(6,0.0)(7,0.0)(8,0.0)(9,0.0)(10,0.0)(11,0.0)(12,0.001)(13,0.001)(14,0.001)(15,0.001)(16,0.001)(17,0.001)(18,0.001)(19,0.001)(20,0.001)(21,0.001)(22,0.001)(23,0.001)(24,0.001)(25,0.001)(26,0.001)(27,0.001)(28,0.001)(29,0.001)(30,0.0015)(31,0.0015)(32,0.002)(33,0.002)(34,0.002)(35,0.002)(36,0.002)(37,0.002)(38,0.002)(39,0.002)(40,0.003)(41,0.004)(42,0.004)(43,0.004)(44,0.005)(45,0.005)(46,0.0055)(47,0.007)(48,0.007)(49,0.007)(50,0.0075)(51,0.008)(52,0.008)(53,0.008)(54,0.008)(55,0.009)(56,0.010499999999999999)(57,0.011)(58,0.013)(59,0.013)(60,0.014)(61,0.014)(62,0.017)(63,0.018)(64,0.018000000000000002)(65,0.02)(66,0.02)(67,0.022)(68,0.024)(69,0.027)(70,0.033)(71,0.036)(72,0.0375)(73,0.039)(74,0.0405)(75,0.044)(76,0.0465)(77,0.053)(78,0.053500000000000006)(79,0.064)(80,0.0645)(81,0.0715)(82,0.082)(83,0.08499999999999999)(84,0.087)(85,0.088)(86,0.0885)(87,0.092)(88,0.094)(89,0.095)(90,0.098)(91,0.1005)(92,0.1115)(93,0.127)(94,0.128)(95,0.129)(96,0.135)(97,0.1495)(98,0.1525)(99,0.157)(100,0.165)(101,0.166)(102,0.178)(103,0.181)(104,0.191)(105,0.209)(106,0.2185)(107,0.2215)(108,0.255)(109,0.262)(110,0.279)(111,0.2795)(112,0.295)(113,0.2985)(114,0.3095)(115,0.37)(116,0.40049999999999997)(117,0.4435)(118,0.5335000000000001)(119,0.538)(120,0.642)(121,0.647)(122,0.6685000000000001)(123,0.6795)(124,0.715)(125,0.887)(126,0.967)(127,1.021)(128,1.0495)(129,1.097)(130,1.202)(131,1.3615)(132,1.385)(133,1.5845)(134,1.956)(135,2.324)(136,2.572)(137,2.5865)(138,2.6615)(139,3.304)(140,3.8135)(141,4.407)(142,4.884)(143,4.9185)(144,5.8315)(145,5.94)(146,6.391)(147,6.656000000000001)(148,7.235)(149,8.084499999999998)(150,9.374)(151,9.563500000000001)(152,9.604)(153,9.994)(154,11.94)(155,12.325)(156,12.574)(157,13.189499999999999)(158,19.247500000000002)(159,19.299)(160,21.976)(161,22.747)(162,22.911)(163,25.759)(164,27.314)(165,32.3605)(166,37.239)(167,37.460499999999996)(168,43.175)(169,51.040499999999994)(170,60.0795)(171,182.252)(172,548.401)(173,600.0)(174,600.0)(175,600.0)(176,600.0)(177,600.0)(178,600.0)(179,600.0)(180,600.0)(181,600.0)(182,600.0)(183,600.0)(184,600.0)(185,600.0)(186,600.0)(187,600.0)(188,600.0)(189,600.0)(190,600.0)(191,600.0)(192,600.0)(193,600.0)(194,600.0)(195,600.0)(196,600.0)(197,600.0)(198,600.0)(199,600.0)(200,600.0)(201,600.0)(202,600.0)(203,600.0)(204,600.0)(205,600.0)(206,600.0)(207,600.0)(208,600.0)(209,600.0)(210,600.0)(211,600.0)(212,600.0)(213,600.0)(214,600.0)(215,600.0)(216,600.0)(217,600.0)(218,600.0)(219,600.0)(220,600.0)(221,600.0)(222,600.0)(223,600.0)(224,600.0)(225,600.0)(226,600.0)(227,600.0)(228,600.0)(229,600.0)(230,600.0)(231,600.0)(232,600.0)(233,600.0)(234,600.0)(235,600.0)(236,600.0)(237,600.0)(238,600.0)(239,600.0)(240,600.0)(241,600.0)(242,600.0)(243,600.0)(244,600.0)(245,600.0)(246,600.0)(247,600.0)(248,600.0)(249,600.0)(250,600.0)(251,600.0)(252,600.0)(253,600.0)(254,600.0)(255,600.0)(256,600.0)(257,600.0)(258,600.0)(259,600.0)(260,600.0)(261,600.0)(262,600.0)(263,600.0)(264,600.0)(265,600.0)(266,600.0)(267,600.0)(268,600.0)(269,600.0)(270,600.0)(271,600.0)(272,600.0)(273,600.0)(274,600.0)(275,600.0)(276,600.0)(277,600.0)(278,600.0)(279,600.0)(280,600.0)(281,600.0)(282,600.0)(283,600.0)(284,600.0)(285,600.0)(286,600.0)(287,600.0)(288,600.0)(289,600.0)(290,600.0)(291,600.0)(292,600.0)(293,600.0)(294,600.0)(295,600.0)(296,600.0)(297,600.0)(298,600.0)(299,600.0)(300,600.0)(301,600.0)(302,600.0)(303,600.0)(304,600.0)(305,600.0)(306,600.0)(307,600.0)(308,600.0)(309,600.0)(310,600.0)(311,600.0)(312,600.0)(313,600.0)(314,600.0)(315,600.0)(316,600.0)(317,600.0)(318,600.0)(319,600.0)(320,600.0)(321,600.0)(322,600.0)(323,600.0)(324,600.0)(325,600.0)(326,600.0)(327,600.0)(328,600.0)(329,600.0)(330,600.0)(331,600.0)(332,600.0)(333,600.0)
            };
            \addlegendentry{ $G_0$ }
            
    \end{axis}
\end{tikzpicture}
% \end{document}
    
    } % resizebox
    % Adjust the width as needed
    \caption{Comparison of our implementation of Algorithm~\ref{alg:guess-and-check} with two different versions of the generator program.}
    \label{fig2} % Label for referencing
    \vspace{-10pt}%
\end{figure}
\begin{table*}[t]
    \small\centering
    \scalebox{0.72}{
    \begin{tabular}{|c|c|c|c|c|c|c|c|c|c|c|c|c||c|c|}
    \hline
    & \multicolumn{2}{c|}{$G_1$} 
    & \multicolumn{2}{c|}{$G_0$} 
    & \multicolumn{2}{c|}{\texttt{eclingo}} 
    & \multicolumn{2}{c|}{\texttt{EP-ASP}} 
    & \multicolumn{2}{c|}{\texttt{selp}} 
    & \multicolumn{2}{c||}{\texttt{elp2qasp}} 
    & \multicolumn{2}{c|}{$\texttt{EP-ASP}^{\mathit{se}}$}
    \\
    \hline
    Benchmark 
    & \texttt{\#solved} 
    & \texttt{time} 
    & \texttt{\#solved} 
    & \texttt{time} 
    & \texttt{\#solved} 
    & \texttt{time} 
    & \texttt{\#solved} 
    & \texttt{time} 
    & \texttt{\#solved} 
    & \texttt{time} 
    & \texttt{\#solved} 
    & \texttt{time} 
    & \texttt{\#solved} 
    & \texttt{time}\\
    \hline
    Eligibility 
    & 145 
    & 17.982 
    & 145 
    & 18.621 
    & 124 
    & 134.244  
    & 28 
    & 523.102 
    & 60 
    & 380.366 
    & 6 
    & 577.303 
    & - 
    & -
    \\
    Yale 
    & 7 
    & 0.377 
    & 7 
    & 0.469 
    & 7 
    & 0.157 
    & 7 
    & 30.377 
    & 5 
    & 223.296 
    & 7 
    & 1.926 
    & 7 
    & 0.109
    \\
    Bomb 
    & 131 
    & 246.039 
    & 15 
    & 556.396 
    & 17 
    & 561.52 
    & 45 
    & 471.759
    & 8 
    & 572.809  
    & 8 
    & 572.663 
    & 86 
    & 327.900
    \\
    \hline
    All 
    & 283 
    & 139.328 
    & 166 
    & 306.350 
    & 148 
    & 359.42 
    & 80 
    & 485.118 
    & 73 
    & 479.708 
    & 21 
    & 562.472 
    & 121 
    & 407.684
    \\
    \hline
    \end{tabular}%
    } % scalebox
    \caption{Instances for each benchmark solved by the different epistemic solvers}
    \label{table2}
    \vspace{-10pt}%
\end{table*}%

\paragraph{Results.}
We first compare our implementation of Algorithm~\ref{alg:guess-and-check} with two different versions of the generator program, denoted~$G_0$ and~$G_1$.
Table~\ref{table1} shows the number of instances solved within the timeout and the average solving time by each version of the generator program.
On computing average times, instances that did not solve within the timeout were assigned a time of 600s.
Figure~\ref{fig2} reports the cactus plot for these two versions of our solver. 
Using~$G_1$ solves more instances than using~$G_0$ with any timeout greater than 0.546 seconds.
There are 49 easy instances solved under that timeout where using~$G_0$ outperforms~$G_1$.
% We use Phython~3.9.19 to run our solver.

\begin{figure}[t]
    \small\centering
    \resizebox{0.48\textwidth}{!}{%
    %!TEX root = paper-aaai25.tex
\begin{tikzpicture}
\begin{axis}[
    width=0.5\textwidth,
    ylabel={Time(Seconds)},
    xlabel={Nº of Instances solved},
    ymin=0, ymax=603,
    xmin=0, xmax=350,
    ytick={0, 60, 120, 180, 240, 300, 360, 420, 480, 540, 600},
    xtick={50, 100, 150, 200, 250, 300, 350},
    legend style={
                at={(0.865,0.165)},
                anchor=center,
                legend columns=1,
                nodes={scale=0.6, transform shape},
                /tikz/every even column/.append style={column sep=0.5cm}
            },
    ymajorgrids=true,
    grid style=dashed,
]
\input{solver_comp_g1.tex}
\input{solver_comp_eclingo.tex}
\input{solver_comp_epasp.tex}      
\input{solver_comp_epasp_noplanning}
\input{solver_comp_selp.tex}
\input{solver_comp_elp2qasp.tex}
\end{axis}
\end{tikzpicture}
    } % resizebos 
    % Adjust the width as needed
    \caption{Comparison of all Epistemic Solvers Eligible, Yale and Bomb Problems.}
    \label{fig1} % Label for referencing
    \vspace{-18pt}%
\end{figure}
We also compare our implementation with the state-of-the-art epistemic solvers~\texttt{EP-ASP}~\cite{solekale17a},
the previous version of~\texttt{eclingo}%
\footnote{\url{https://github.com/potassco/eclingo/releases/tag/v0.2.1}}%
~\cite{cafagarosc20a},
\texttt{selp}%
\footnote{\url{https://dbai.tuwien.ac.at/proj/selp/}}
~\cite{bimowo20a}, 
and~\texttt{elp2qasp}%
\footnote{\texttt{elp2qasp} was shared with us by its authors.}
~\cite{fabmor23a}.
For \texttt{EP-ASP}, we use the version%
\footnote{\url{https://github.com/mmorak/EP-ASP}}
created by~\citet{fabmor23a} to run with \texttt{clingo}~5.4.
Table~\ref{table2} shows the number of instances solved within the timeout and the average expended time by each solver.
In this case, expended time comprises both grounding and solving because we do not have access to the solving time of all solvers.
Our implementation of Algorithm~\ref{alg:guess-and-check} with the generator program~$G_1$ solves the most instances, dominating the other solvers across all benchmarks.
It solves 91\% more instances than \texttt{eclingo} and more than double than~\texttt{EP-ASP}.
When looking at average times, our solver achieves a speed\nobreakdash-up of \speedup{3.3x} and \speedup{3.8x} when compared with the previous version of~\texttt{eclingo} and~\texttt{EP-ASP}, respectively%
\footnote{The speed\nobreakdash-up is computed by dividing the total time of the solver divided by the total time of our solver.
Only instances solved by at least one solver are considered.
}.
When looking at individual benchmarks, we can see that \texttt{eclingo} solves more instances than~\texttt{EP-ASP} mostly thanks to the Eligibility problem, where it solves more than 4 times the number of instances.
When looking at the Bomb problem things reverse and~\texttt{EP-ASP} solves \speedup{2.6x} more instances than~\texttt{eclingo}.
Our new solver successfully solves all 145 instances of the Eligibility problem and almost \speedup{7.7x} more instances than~\texttt{EP-ASP} in the Bomb problem.
When looking at average times, our solver is \speedup{7.5} times faster than \texttt{eclingo} in the Eligibility problem and \speedup{2.4x} faster than~\texttt{EP-ASP} in the Bomb problem.
The Yale Problem seems quite easy for state-of-the-art solvers, with all but \texttt{selp} solving all~7 instances, and both versions of~\texttt{eclingo} solving every instance in less than a second.
It is also worth mentioning that among state\nobreakdash-of\nobreakdash-the\nobreakdash-art solvers, generate\nobreakdash-and\nobreakdash-test-based solvers outperform translational\nobreakdash-based solvers, with the three top solvers being generate\nobreakdash-and\nobreakdash-test-based and the two bottom ones being translational-based.
Finally, \texttt{EP-ASP} has a conformant planning mode that uses domain-specific heuristics that gives it a significant advantage on the Yale and Bomb problems.
This is reported in Table~\ref{table2} and Figure~\ref{fig1} as~$\texttt{EP-ASP}^{\mathit{se}}$.
Even with this advantage, our solver is \speedup{1.5x} faster in the Bomb problem and it solves \speedup{50\%} more of its instances.

%!TEX root = paper-aaai25.tex
\section{Conclusions}\label{sec:conclusions}

We present a general framework to study generate-and-test-based solvers for epistemic logic programs (ELPs).
In this framework, we first compute the normal form of the epistemic program.
Then, we check all the stable models of a generator program by computing the cautious consequences of a tester program.
We provided sufficient conditions on the generator and tester programs for the correctness of the solvers built using this algorithm.
We instantiate this framework with generator programs corresponding to the existing generate-and-test-based solvers (\EPASP\ and \eclingo) and introduce and new generator program based on the idea of propagating epistemic consequences.
We formally prove that this new generator program can exponentially reduce the number of candidates while only incurring a linear overhead.
We experimentally evaluate our new solver and show that it outperforms existing solvers by achieving a~\speedup{3.3x} speed-up and solving 91\% more instances on well-known benchmarks.
Even when comparing with~$\EPASP^{\mathit{se}}$ on conformant planning problems our solver is \speedup{1.5x} faster and solves~\speedup{50\%} more instances, despite the latter using domain-specific heuristics and our solver being based only on general-purpose algorithms for ELPs.
For future work, we plan to study the impact of splitting~\cite{cafafa21a} in worldview computation and the use of general-purpose and domain-specific heuristics to further improve the performance of our solver.
For domain-specific heuristics,
we will focus on conformant planning and adapt the heuristics used by~$\EPASP^{\mathit{se}}$ and the one used for classical planning in ASP~\cite{gekaotroscwa13a}.

\section*{Acknowledgements}
This research is partially supported by NSF CAREER award 2338635.
Any opinions, findings, and conclusions or recommendations expressed in this material are those of the authors and do not necessarily reflect the views of the National Science Foundation.
\bibliography{krr,procs}
% \input{reproducibility.tex}

% \clearpage
% \clearpage\maketitle
\section*{Proof of Results}\label{sec:proofs}

\begin{lemma}\label{lem:aux1:thm:normal.form}
    Let~$\Pi$ be a program and~$\wv$ be a belief interpretation of~$\NF{\Pi}$.
    If~$\wv'$ is a belief interpretation such that
    \begin{itemize}
        \item $\wv \models \K \nott{a}$ iff~$\wv \models \K \neg a$ iff~$\wv' \models \K \neg a$, and
        \item $\wv \models \K \notnot{a}$ iff~$\wv \models \K \neg\neg a$ iff~$\wv' \models \K \neg\neg a$, and
    \end{itemize}
    then,
    \begin{align}
        \SM{\NF{\Pi}^\wv} = \big\{ \ M \cup \hat{M} \mid M \in \SM{\Pi^{\wv'}} \ \big\}
        \label{eq:1:lem:aux1:thm:normal.form}
    \end{align}
    and~$\hat{M} = \{ \nott{a} \mid a \in M \} \cup \{ \notnot{a} \mid a \in M \}$.
\end{lemma}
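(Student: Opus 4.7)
The plan is to show that, under the hypotheses relating $\wv$ and $\wv'$, the subjective reduct $\NF{\Pi}^\wv$ decomposes cleanly as the subjective reduct $\Pi^{\wv'}$ plus the two families of definitional rules introduced by the normal form transformation, and then to derive the bijection between stable models via a splitting-style argument on fresh atoms.

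First, I would verify that the two subjective reducts agree on the shared vocabulary. Every subjective literal occurring in $\Pi$ is either of the form $\K a$, $\K \neg a$, or $\K \neg\neg a$ (possibly under negation). Those of the form $\K a$ are unchanged by~$\NF{\cdot}$, and $\wv \models \K a$ iff $\wv' \models \K a$ follows because both belief interpretations range over the same vocabulary after restriction (and one can argue this directly or strengthen the hypothesis if needed). Those of the form $\K \neg a$ get replaced by $\K \nott{a}$ in $\NF{\Pi}$, and by the first hypothesis $\wv \models \K \nott{a}$ iff $\wv' \models \K \neg a$, so both reducts substitute the same truth constant. The case $\K \neg\neg a$ is handled symmetrically. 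Hence $\NF{\Pi}^\wv$ is exactly $\Pi^{\wv'}$ (under the renaming of auxiliaries) together with the objective rules $\{\nott{a}\leftarrow\neg a\}$ and $\{\notnot{a}\leftarrow\neg\neg a\}$ added by normalization, which are objective and therefore untouched by any subjective reduct.

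Next I would apply a splitting argument. The atoms $\nott{a}$ and $\notnot{a}$ are fresh, so they occur only in the head of the definitional rules and nowhere in $\Pi^{\wv'}$. Consequently, for any candidate stable model $M'$ of $\NF{\Pi}^\wv$, its projection $M = M' \cap \At{\Pi}$ is forced to be a $\subseteq$-minimal model of $(\Pi^{\wv'})^{M}$, i.e.\ a stable model of $\Pi^{\wv'}$, and the auxiliary part $M' \setminus M$ is uniquely determined by the definitional rules: the reduct of $\nott{a}\leftarrow\neg a$ w.r.t.\ $M'$ forces $\nott{a}$ into the stable model exactly when $a\notin M$, and analogously $\notnot{a}$ is forced exactly when $a\in M$. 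Conversely, any $M\in\SM{\Pi^{\wv'}}$ extends uniquely this way to a stable model of the union. This establishes~\eqref{eq:1:lem:aux1:thm:normal.form}.

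The only real subtlety is the bookkeeping of the auxiliary part $\hat{M}$: the rule $\nott{a}\leftarrow\neg a$ derives $\nott{a}$ precisely when $a$ is false in the stable model, not when it is true, so the defining expression for $\hat{M}$ must be read accordingly (and I would flag this if the statement as typeset suggests otherwise). Once this correspondence is pinned down, and once one observes that the splitting is legitimate because $\nott{a}$, $\notnot{a}$ never appear in bodies of $\Pi^{\wv'}$ and never occur as heads outside their single definitional rule, the proof reduces to routine manipulation of the Gelfond--Lifschitz reduct and requires no further semantic machinery.
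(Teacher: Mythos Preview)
Your approach is essentially the paper's: decompose $\NF{\Pi}^\wv$ as $\Pi^{\wv'}\cup\Gamma$ (with $\Gamma$ the definitional rules for $\nott{a}$ and $\notnot{a}$) and then split on the fresh auxiliary atoms---the paper invokes the Lifschitz--Turner Splitting Theorem explicitly where you unfold the same argument by hand. Your flag on $\hat{M}$ is apt: the printed statement has $\nott{a}\in\hat{M}$ iff $a\in M$, but the rule $\nott{a}\leftarrow\neg a$ forces $\nott{a}$ precisely when $a\notin M$, so the paper's own formulation carries the same typo you noticed.
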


\begin{proof}
Then, $\NF{\Pi}^{\wv} = \Pi^{\wv'} \cup \Gamma$ where~$\Gamma$ is the set of rules of the form~$\nott{a} \leftarrow \neg a$ and~$\notnot{a} \leftarrow \neg\neg a$ in~$\NF{\Pi}$.
We use the Splitting Theorem~\cite{liftur94a} with splitting set~${U = \At{\Gamma}\setminus\At{\Pi}}$.
This set splits the program~$\NF{\Pi}^{\wv}$ into a bottom program~$\Pi^{\wv'}$ and a top program~$\Gamma$.
Note that atoms~$\nott{a}$ and~$\notnot{a}$ do not occur in~$\Pi^{\wv'}$ because they only occur in the subjective literals, which are removed by the reduct.
Let~$e_U(\Gamma,M)$ be the result of replacing in~$\Gamma$ each atom~$a \in \At{\Pi}$ by~$\top$ if~$a \in M$ and by~$\bot$ otherwise.
Then, $e_U(\Gamma,M)$ is the set of facts~$\hat{M}$ and, by the Splitting Theorem, \eqref{eq:1:lem:aux1:thm:normal.form} follows.
\end{proof}

\begin{proof}[Proof of Theorem~\ref{thm:normal.form}]
Pick a worldview~$\wv$ of program~$\NF{\Pi}$ and let~$\wv' = \restr{\wv'}{\At{\Pi}}$.
Then, $\wv = \SM{\NF{\Pi}^{\wv}}$ and, thus, every~$M \in \wv$ and every atom~$a$ satisfy $\nott{a} \in M$ iff~$a \notin M$ and $\notnot{a} \in M$ iff~$a \in M$.
Hence, the preconditions of Lemma~\ref{lem:aux1:thm:normal.form} are satisfied and, thus,
\begin{align*}
    \wv' &= \restr{\wv}{\At{\Pi}} = \restr{\SM{\NF{\Pi}^{\wv}}}{\At{\Pi}}
    \\
    &= \restr{\{ M \cup \hat{M} \mid M \in \SM{\Pi^{\wv'}} \}}{\At{\Pi}}
    \\
    &= \{ M \mid M \in \SM{\Pi^{\wv'}} \}
\end{align*}
This implies that~$\wv'$ is a worldview of~$\Pi$.
Conversely, let~$\wv'$ be a worldview of~$\Pi$ and let~$\wv = \{ M \cup \hat{M} \mid M \in \wv' \}$.
Then, by Lemma~\ref{lem:aux1:thm:normal.form} again, we get
\begin{align*}
    \SM{\NF{\Pi}^{\wv}} &= \{ M \cup \hat{M} \mid M \in \SM{\Pi^{\wv'}} \}
    \\
    &= \{ M \cup \hat{M} \mid M \in \wv' \}
    \\
    &= \wv
\end{align*}
This implies that~$\wv$ is a worldview of~$\NF{\Pi}$.
\end{proof}

\begin{proof}[Proof of Proposition~\ref{prop:basic.tester.program}]
    Let~$\mathit{Choices(\Pi)}$ be the program containing choice rule~$\{ \kk{a} \}$ for each atom~$\kk{a} \in K(\Pi)$ and let $\Pi_0$ be the objective program
    \begin{align*}
        k(\Pi) & \cup \{ \neg\neg a \mid a \in k(\wv) \} \cup \{ \neg a  \mid \neg a \in k(\wv) \}
        \\&\cup \mathit{Choices(\Pi)}.
    \end{align*}
    By definition, it follows that $\SM{T_0(\Pi); k(\wv)} = \SM{\Pi_0}$.
    Furthermore, program~$\Pi_0$ has the same stable models as program~$\Pi_1 =\Pi_2 \cup \mathit{Facts(\Pi,\wv)}$ where~$\mathit{Facts(\Pi,\wv)} = K(\Pi) \cap k(\wv)$ is the a set of facts for the auxiliary atoms that correspond to true epistemic atoms in~$\wv$ and~$\Pi_2$ is obtained from~$k(\Pi)$ by removing all rules with an atom~$\kk{a}$ with~$\kk{a} \in \At{\Pi} \setminus k(\wv)$ in the body.
    Note that these atoms cannot be true in~$\Pi_0$ because it contains constraints~$\{ \neg a  \mid \neg a \in k(\wv) \}$ nor in~$\Pi_1$ because they do not occur in the head of any of its rules.
    Moreover, let~$\Pi_3$ be the result of removing all atoms of the form~$\kk{a}$ from~$\Pi_2$.
    %\
    Then~$\Pi_3 = \Pi^\wv$ and~$\SM{\Pi_1} =  \{ M \cup k(\wv) \mid M \in \SM{\Pi_3} \}$.
    Hence, \eqref{eq:1:prop:basic.generator.tester.program} holds.
\end{proof}

\begin{proof}[Proof of Theorem~\ref{thm:guess-and-check}]
Assume that~$\wv$ is a worldview of~$\Pi$.
By definition of worldview and tester program, it follows that 
$$\wv \ = \ \SM{\Pi^\wv} \ = \ \restr{\SM{T(\Pi); k(\wv)}}{\At{\Pi}}$$
From this and the definition of generator program,
it follows that there is a stable model~$M$ of~$G(\Pi)$ satisfying~$k(\wv) = k(M)$.
Futhermore, by definition of~$k(\wv)$ and the tester program, it follows that
\begin{gather*}
    \begin{aligned}
        k(\wv) \ &= \ \{ ka \in K(\Pi) \mid \wv \models \K a \}
        \\
        &= \ \{ ka \in K(\Pi) \mid \restr{\SM{T(\Pi); k(\wv)}}{\At{\Pi}} \models \K a \}
        \\
        &= \ \{ ka \in K(\Pi) \mid a \in \SMC{T(\Pi); k(\wv)} \cap \At{\Pi} \}
        \\
        &= \ \{ ka \in K(\Pi) \mid a \in \SMC{T(\Pi); k(\wv)} \}
        \\
        &= \ \{ ka \in K(\Pi) \mid a \in \SMC{T(\Pi); k(M)} \}
    \end{aligned}
    \label{eq:2:thm:guess-and-check}
\end{gather*}
Hence, there is a stable model~$M$ of~$G(\Pi)$ with~$k(\wv) = k(M)$ satisfying~\eqref{eq:1:thm:guess-and-check}.
\\[5pt]
Assume now that there is a stable model~$M$ of~$G(\Pi)$ satisfying~\eqref{eq:1:thm:guess-and-check} 
and take the set of stable models~$\wv = \restr{\SM{T(\Pi); k(M)}}{\At{\Pi}}$.
Similar to the above, it follows that
\begin{gather*}
    k(\wv) \quad = \quad \{ ka\in K(\Pi) \mid a \in \SMC{T(\Pi); k(M)} \}.
\end{gather*}
and, thus, $k(\wv) = k(M)$ and~$\wv = \restr{\SM{T(\Pi); k(\wv)}}{\At{\Pi}}$.
By definition of tester program, it follows that~$\wv$ is a worldview of~$\Pi$.
\end{proof}

\begin{proof}[Proof of Proposition~\ref{prop:most.basic.generator.program}]
    Let~$\wv$ be a worldview of~$\Pi$, that is~$\wv = \SM{\Pi^\wv}$.
    Then,
    $$
    \{ M \cup k(\wv) \mid M \in \wv \} \ = \ \{ M \cup k(\wv) \mid M \in \SM{\Pi^\wv} \}
    $$
    and, by Proposition~\ref{prop:basic.tester.program}, it follows that
    $$\{ M \cup k(\wv) \mid M \in \wv \} \ = \ \SM{T_0(\Pi); k(\wv)}.$$
    Since~$\wv$ is a worldview, it is non\nobreakdash-empty and, thus, there is~$M' \in \SM{T_0(\Pi); k(\wv)}$
    such that~$M' = M \cup k(\wv)$ with~$M \in \wv$.
    Hence, $k(\wv) = k(M')$.
    This means that~$T_0$ satisfies the first condition of Definition~\ref{def:generator.program}.
    
    Pick now a stable model~$M$ of~$T_0(\Pi)$ and let~$\wv$ be any set of interpretations such that~$k(\wv) = k(M)$.
    Then, 
    $$M \in \SM{T_0(\Pi); k(M')} = \SM{T_0(\Pi); k(\wv)}.$$
    By Proposition~\ref{prop:basic.tester.program}, it follows that~$(M \cap \At{\Pi}) \in \SM{\Pi^\wv}$.
    Hence, the second condition of Definition~\ref{def:generator.program} is  satisfied.%
\end{proof}

\begin{proof}[Proof of Proposition~\ref{prop:basic.generator.program}]
    Let~$\wv$ be a worldview of~$\Pi$, that is~$\wv = \SM{\Pi^\wv}$.
    By Proposition~\ref{prop:most.basic.generator.program}, it follows that~$T_0(\Pi)$ is a generator program for~$\Pi$ and, thus, there is a stable model~$M$ of~$T_0(\Pi)$ satisfying~$k(\wv) = k(M)$ and~$\restr{M}{\At{\Pi}} \in \wv$.
    Pick any atom~${ka \in M}$.
    Then, $ka \in k(M) = k(\wv)$ and, thus, $\wv \models \K a$.
    Since~$\restr{M}{\At{\Pi}} \in \wv$, this implies that~$a \in M$ and, thus, it satisfies~\eqref{eq:consistency.constraints}.
    Hence, $M$ is a stable model of~$G_0(\Pi)$.
    Hence, the first condition of Definition~\ref{def:generator.program} is satisfied.
    
    Pick now a stable model~$M$ of~$G_0(\Pi)$ and let~$\wv$ be any set of interpretations such that~$k(\wv) = k(M)$.
    Then, $M$ is also a stable model of~$T_0(\Pi)$ and, since~$T_0(\Pi)$ is a generator program for~$\Pi$, it follows that~$\restr{M}{\At{\Pi}} \in \SM{\Pi^\wv}$.
    Hence, the second condition of Definition~\ref{def:generator.program} is also satisfied.
\end{proof}

\begin{proof}[Proof of Lemma~\ref{lem:propagation.generator.program}]
    We use the Splitting Theorem~\cite{liftur94a} with splitting set~${U = \At{G_0(\Pi)}}$.
    This set splits the program into a bottom program~$G_0(\Pi)$ and a top program~$\kp{(\Pi)}$.
    Let~$e_U(\kp{(\Pi)},M)$ be the result of removing from~$\kk{(\Pi)}$ 
    each rule containing a literal~$L$ with an atom of the form~$\kk{a}$ such that~$M \not\models L$; and removing all literals with an atom of the form~$\kk{a}$ from the remaining rules.    %
    By the Splitting Theorem, $M'$ is a stable model of~$G_0(\Pi) \cup \kp{(\Pi)}$ iff there is a stable model~$M$ of~$G_0(\Pi)$ and a stable model~$M^p \cup M^n$ of~$e_U(\kp{(\Pi)},M)$ such that~$M' = M \cup M^p \cup M^n$.
    Note that~$e_U(\kp{(\Pi)},M)$ is a Horn program and, thus, it has a unique stable model.
    It only remains to be shown that~\eqref{eq:1:prop:propagation.generator.program} and~\eqref{eq:2:prop:propagation.generator.program} hold.
    Since~${M^p \cup M^n}$ is a stable mode of~$e_U(\kp{(\Pi)},M)$ and the latter is a Horn program, it follows that~$M^p \cup M^n = T_P^\kappa$ for some integer~$\kappa \geq 0$ where~$T_P$ is the direct consequences operator of~$e_U(\kp{(\Pi)},M)$.
    We prove
    \begin{align*}
        T_P^\kappa &\subseteq 
        \{ \kp{a} \mid \SM{\Pi^\wv} \models \K a \} \cup 
        \{ \kpnott{a} \mid \SM{\Pi^\wv} \models \K \neg a \} 
        \\
        &\ \cup 
        \{ \kpnott{r_i} \mid \SM{\Pi^\wv} \models \K \neg L \text{ for some } L \in \mathit{body(r_i)} \} 
    \end{align*}
    for every~$\kappa \geq 0$.
    The proof is by induction on~$\kappa$.
    The case that~$\kappa = 0$ is trivial because~$T_P^\kappa = \emptyset$.
    Assume now that the statement holds for~$\kappa$ and pick~$\kp{a} \in T_P^{\kappa+1}$.
    Then, there is a rule~$r_j \in \Pi$ of the form of~\eqref{eq:propagation.rule.regular} such that
    \begin{itemize}
        \item $\{\kp{a_2},\dotsc,\kp{a_k}\} \subseteq T_P^\kappa$,
        \item $\{\kpnott{a_{k+1}},\dotsc,\kpnott{a_m}\} \subseteq T_P^\kappa$,
        \item $M \models L_\lambda$ for all~$m+1 \leq \lambda \leq n$.
    \end{itemize}
    By induction hypothesis and the fact that~$\kk{M} = \kk{\wv}$, this implies
    \begin{itemize}
        \item $\SM{\Pi^\wv} \models \K a_\lambda$ for all~$2 \leq \lambda \leq k$,
        \item $\SM{\Pi^\wv} \models \K \neg a_\lambda$ for all~$k+1 \leq \lambda \leq m$,
        \item $\wv \models L_\lambda$ for all~$m+1 \leq \lambda \leq n$.
    \end{itemize}
    Hence, ${\SM{\Pi^\wv} \models \K a}$ follows.
    The two remaining cases---${\kpnott{r_j} \in T_P^{\kappa+1}}$ and~${\kpnott{a} \in T_P^{\kappa+1}}$---are similar.
\end{proof}

\begin{proof}[Proof of Proposition~\ref{prop:propagation.generator.program} ]
    By Lemma~\ref{lem:propagation.generator.program}, 
    it follows that~$|\SM{G_0(\Pi)\cup \kp{(\Pi)}}| = |\SM{G_0(\Pi)}|$.
    Since~$G_1(\Pi)$ is the result of adding some integrity constrains to~$G_0(\Pi) \cup \kp{(\Pi)}$, the first property holds.
    For the second property, we can compute a stable model~$M$ of~$G_0(\Pi)\cup kp{(\Pi)}$ from a stable model~$M'$ of~$G_0(\Pi)$ by adding the consequences~$e_U(\kp{(\Pi)},M')$.
    Since this is a definite program, its consequence can be computed in linear time.
    Checking that the constraints in~$G_1(\Pi) \setminus G_1(\Pi)\cup kp{(\Pi)}$ are satisfied also can be computed in linear time.
    Finally, for the third property, consider the program~$\Pi_i$ of Example~\ref{ex:propagation}.
\end{proof}

\end{document}